\documentclass[11pt]{article}
\usepackage{dilab_arxiv}

\usepackage{amsmath,amsfonts,bm}

\def\eqref#1{equation~\ref{#1}}
\def\Eqref#1{Equation~\ref{#1}}

\def\1{\bm{1}}

\DeclareMathAlphabet{\mathsfit}{\encodingdefault}{\sfdefault}{m}{sl}
\SetMathAlphabet{\mathsfit}{bold}{\encodingdefault}{\sfdefault}{bx}{n}

\usepackage{enumitem}
\usepackage{flafter}
\usepackage{placeins}
\usepackage{bookmark}
\apptocmd{\thebibliography}{\raggedright}{}{}

\newtheorem{corollary}{Corollary}

\setlist[itemize]{leftmargin=*,itemsep=3pt,topsep=5pt}

\hypersetup{
  pdftitle={The Router Within: Eliciting Native Skill Routing from a Frozen LLM},
  pdfauthor={Ruishuo Chen, Xun Wang, Yu Chen, Zhuoran Li, and Longbo Huang},
  pdfsubject={Gavel: skill routing from a frozen language model},
  pdfkeywords={large language models, agents, skills, routing, Gavel},
  bookmarksnumbered=true
}

\usepackage{wrapfig}
\newtheorem{theorem}{Theorem}
\usepackage[ruled,vlined,linesnumbered]{algorithm2e}
\usepackage{subcaption}
\usepackage{array}
\usepackage{graphicx}
\usepackage{makecell}

\title{Flowing Faster to Coordinate: One-Step Online Multi-Agent Flow Policies}
\runningtitle{Flowing Faster to Coordinate: One-Step Online Multi-Agent Flow Policies}
\date{arXiv preprint, October 2026}
\paperlogo{\includegraphics[height=1.5cm]{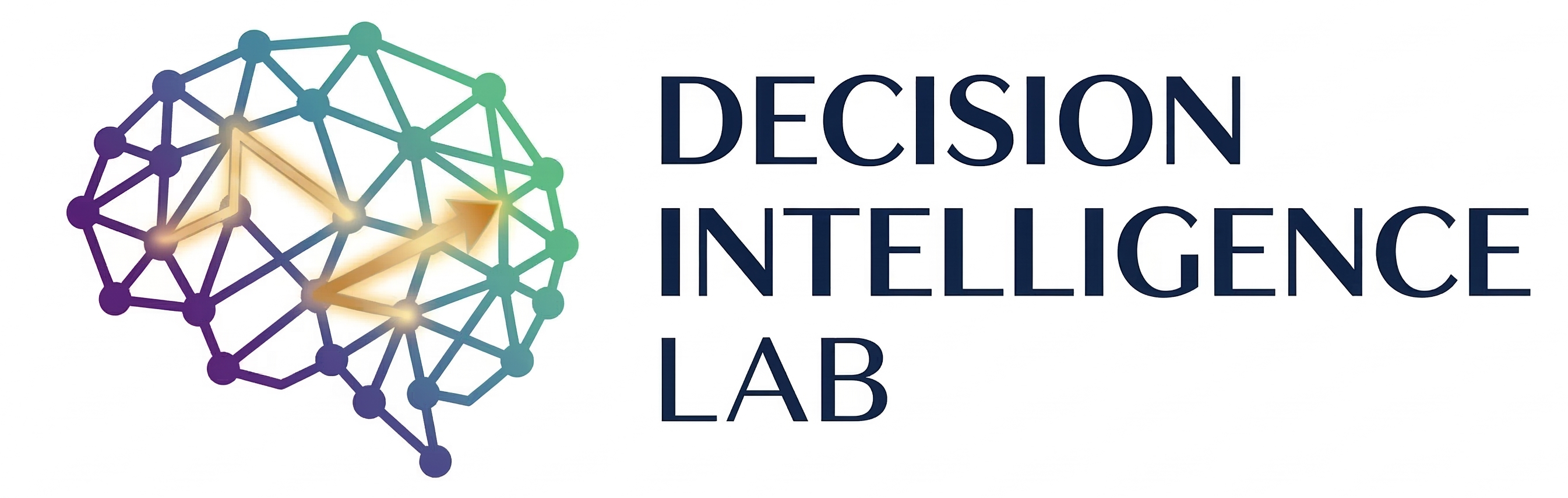}}
\author{
  Zhuoran Li$^{1}$, Yunzhan Li$^{1}$, Xun Wang$^{1}$, Yihan Du$^{2}$, and
  Longbo Huang$^{1\,\text{\faEnvelope}}$
  \\[0.3em]\normalfont
  $^1$Institute for Interdisciplinary Information Sciences, Tsinghua University
  \\
  $^2$Engineering Systems and Design Pillar, Singapore University of Technology and Design
  \\
  \text{\faEnvelope}\ Correspondence: \href{mailto:longbohuang@tsinghua.edu.cn}{\texttt{longbohuang@tsinghua.edu.cn}}
}

\begin{document}
\maketitle
\thispagestyle{fancy}

\begin{abstract}
Multi-agent reinforcement learning (MARL) provides a powerful framework for learning coordinated behaviors through interactions with the environment. Developing MARL policies requires balancing expressive modeling of complex and multimodal action distributions with efficient training and execution. Generative policies, particularly diffusion-based policies, can faithfully capture complex and multimodal behaviors, but costly iterative sampling hinders their scalability in online multi-agent settings. We propose  an \textbf{\underline{O}}nline \textbf{\underline{MA}}RL framework via \textbf{one-step \underline{F}low model} (\textbf{OMAF}) that combines expressive generative policies with efficient one-step action generation. OMAF employs a Transformer-based flow policy to capture complex coordination behaviors, while its approximate path score surrogate provides a principled route to synchronized flow policy optimization. To enable stable and sample-efficient learning, we further develop a joint optimization scheme coupling softmax Q-value estimation with a joint flow policy objective for coordinated policy learning. By eliminating iterative sampling, OMAF dramatically reduces training overhead without sacrificing policy expressiveness. 
Extensive experiments across 10 standard tasks from MPE and MAMuJoCo show that OMAF consistently achieves superior performance, with up to $3.4\times$ higher returns and $10.5\times$ sample efficiency improvement compared with baseline methods.
These results validate the effectiveness of OMAF as an expressive and computationally efficient one-step flow policy paradigm for online MARL.
\end{abstract}

\section{Introduction}
\label{sec:intro}

Multi-agent reinforcement learning (MARL)~\citep{oliehoek2008optimal} harnesses the power of collective intelligence, empowering agents to unlock capabilities beyond those of isolated learners. This collaborative paradigm has driven remarkable advances across diverse domains, from superhuman gameplay~\citep{vinyals2019grandmaster,berner2019dota,ye2020towards} to embodied coordination~\citep{chen2025multi,guo2026opencda}, scientific discovery, e.g., mathematical theorem proving~\citep{su2025multi,openai2026finite}, and LLM-based agentic systems~\citep{zhao2026stronger,yang2026learning}. Realizing such collective capabilities requires policies that are both sample-efficient and expressive enough to capture diverse coordination behaviors~\citep{anonymous2025heterogeneous,li2026omad}.

Generative policies, particularly diffusion- and flow-based policies~\citep{ho2020denoising,lipmanflow}, provide a powerful foundation for expressive decision-making, naturally capturing complex and multimodal action distributions and diverse, coordinated behaviors beyond conventional unimodal policies~\citep{wang2022diffusion,zhang2026sac}. Such advantage has translated into strong performance with diffusion policies~\citep{maefficient,celikdime} in single-agent online RL, but their iterative sampling remain computationally costly. One-step flow policies overcome this bottleneck with single-step action generation while retaining the benefits of generative modeling~\citep{zhan2026mean,dong2026mean}.

Yet, efficient generative policies remain underexplored in multi-agent RL, despite being particularly important in this setting. The exponentially larger joint state-action spaces of multiple interacting agents demand expressive policies while amplifying the cost of iterative generation as the number of agents grows~\citep{lee2026macflow,li2026omad}. Meanwhile, limited interaction data and complex coordination further require high sample efficiency. More critically, few-step flow generation faces a trade-off between efficiency and policy expressiveness, making it challenging in multi-agent settings. Developing expressive generative policies that retain strong coordination performance with efficient one-step generation therefore remains an open challenge for online MARL.

\begin{wrapfigure}{r}{0.38\textwidth}
    \centering
    \includegraphics[width=\linewidth]{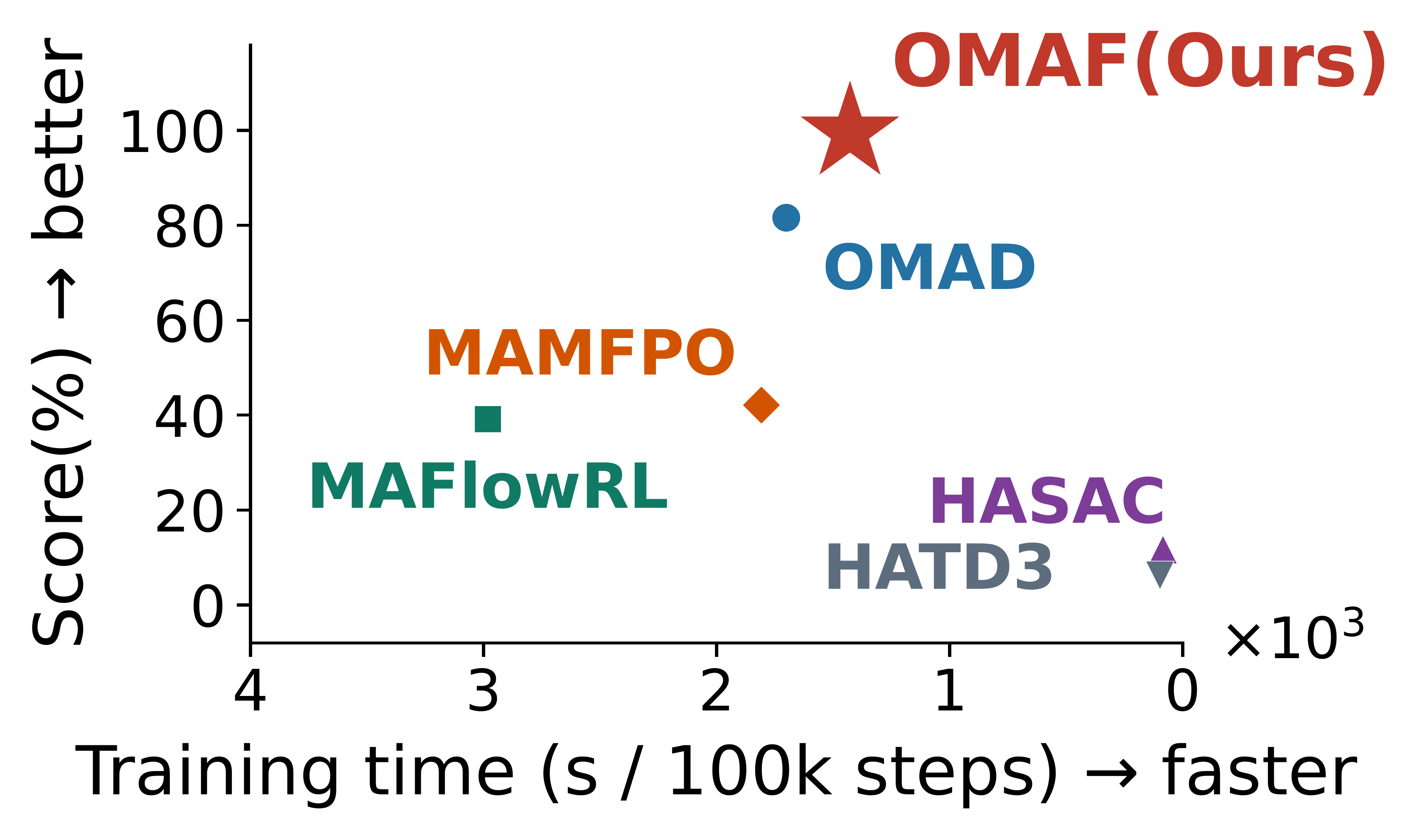}
    \caption{Superior performance and improved training efficiency of OMAF over 10 standard tasks.}
    \label{fig:overview}
\end{wrapfigure}

To tackle these challenges, we propose \textbf{\underline{O}}nline \textbf{\underline{MA}}RL via \textbf{one-step \underline{F}low model} (\textbf{OMAF}), a framework specifically designed for efficient and expressive policy learning in online MARL. OMAF introduces a Transformer-based one-step flow policy for efficient generative policy learning, retaining the expressive coordination while eliminating iterative denoising. We further develop an approximate path score surrogate and synchronized policy optimization with a joint softmax Q-value~\citep{pan2020softmax} for stable and sample-efficient training from scratch. Together, these designs jointly address the efficiency–performance challenge of generative policy learning in online MARL. As illustrated in Figure~\ref{fig:overview}, OMAF simultaneously achieves state-of-the-art performance and training efficiency across multi-agent environments. Our contributions are as follows:

\begin{itemize}
\item We propose \textbf{OMAF}, an efficient one-step flow policy for online MARL under the centralized training and decentralized execution (CTDE) framework. The policy incorporates a Transformer-based velocity network to generate continuous actions through a single denoising step, retaining expressive multi-agent coordination without iterative sampling.

\item We develop a stable joint optimization scheme to effectively train the one-step flow policy. Softmax Q-value estimation provides smooth and reliable value guidance, while synchronized joint flow policy optimization coordinates policy updates across agents and promotes effective multi-agent coordination. Together, these components facilitate stable and sample-efficient learning of expressive one-step flow policies.

\item We conduct extensive experiments across 10 standard online MARL environments from Multi-Agent Particle Environments (MPE) and Multi-Agent MuJoCo (MAMuJoCo). OMAF consistently achieves the best performance, with up to $3.4\times$ higher returns and $10.5\times$ greater sample efficiency than baseline methods, while substantially reducing computational overhead by eliminating iterative denoising.

\end{itemize}
\section{Related Work}
In this section, we discuss representative works on online MARL and flow-based RL policies, and defer a more comprehensive review to Appendix~\ref{appendix:related_works}.

\subsection{Online Multi-Agent Reinforcement Learning}
Online MARL has advanced substantially under centralized training with decentralized execution (CTDE)~\citep{oliehoek2008optimal,matignon2012coordinated}, with representative methods such as MADDPG, MAPPO, VDN, QMIX, and QPLEX~\citep{lowe2017multi,yu2021surprising,sunehag2017value,rashid2018qmix,wangqplex} improving coordination through centralized critics and value factorization. Recent studies further enhance factorization, exploration, robustness, scalability, and heterogeneous-agent coordination~\citep{huang2026pow,liu2025qcofr,qu2026drigm,pan2021regularized,li2023race,zhong2024heterogeneous}, while decentralized methods address scalability and non-stationarity~\citep{tampuu2017multiagent,de2020independent,li2025rac,meulemans2025learning}. 
Recent works introduce generative policies, including  HAQO~\citep{anonymous2025heterogeneous}, OMAD~\citep{li2026omad} and DDPL~\citep{balim2026ddpl}, to improve policy expressiveness and exploration.

\subsection{Flow Policies in Reinforcement Learning}
Flow-based generative policies offer efficient alternatives to diffusion by learning velocity fields that transport noise to actions along straighter trajectories~\citep{lipmanflow,liuflow,albergobuilding,geng2025mean}. In RL, existing works study flow models for value estimation, e.g., flowq~\citep{agrawalla2026floq,agrawalla2026what}, and expressive policies, including ReinFlow~\citep{zhang2025reinflow}, SAC-Flow~\citep{zhang2026sac}, FlowQ~\citep{alles2025flowqenergyguidedflowpolicies}, QIPO~\citep{zhangenergy}, and FQL~\citep{park2025fql}, as well as efficient one-step generation through SSCP~\citep{koirala2025flowbasedsinglestepcompletionefficient}, OFQL~\citep{nguyen2026ofql}, and MVP~\citep{zhan2026mean}. Recent methods further optimize flow policies via policy gradients, including FPO~\citep{mcallister2026flow}, OSMFPG~\citep{wang2026one}, LaMPO~\citep{liu2026latent}, FlowRL~\citep{lyu2026flow}, RFO~\citep{zhong2026reparameterization}, MFPO~\citep{dong2026mean}, and FPMD~\citep{chen2025one}. For MARL, MAC-Flow~\citep{lee2026macflow} and OM$^2$P~\citep{li2026om2p} consider one-step flow policies only in offline learning. We address the missing setting of direct one-step flow optimization in online MARL.

\section{Background}

\subsection{Multi-Agent Reinforcement Learning and Efficient Value Estimation}\label{background:MARL}
We formulate cooperative MARL as a decentralized partially observable MDP (Dec-POMDP), notated as $\langle \mathcal{S},\mathcal{A},\mathcal{P},\mathcal{R},\rho\rangle$~\citep{oliehoek2016concise} with global states, joint actions, transition dynamics, shared rewards, and initial state distribution. The transition function $\mathcal{P}$ specifies the transition dynamics, while each agent $i$ samples $a_t^i\sim\pi_i(\cdot|s_t^i)$ based on $s_t^i$, inducing the factorized joint policy $\boldsymbol{\pi}(a|s)=\prod_{i=1}^{N}\pi_i(a^i|s^i)$. Under the shared reward $r_t=\mathcal{R}(s_t,a_t)$, the objective is to maximize $J(\boldsymbol{\pi})=\mathbb{E}_{\boldsymbol{\pi}}\left[\sum_{t=0}^{\infty}\gamma^t r_t\right],$ where $\gamma\in[0,1)$ is the discount factor. To encourage exploration in online MARL, Maximum-entropy RL~\citep{ziebart2008maximum,haarnoja2018soft} augments the objective with an entropy regularizer to encourage exploration: $J_{\mathrm{ent}}(\boldsymbol{\pi})=\mathbb{E}_{\boldsymbol{\pi}}\left[\sum_{t=0}^{\infty}\gamma^t\left(r_t+\alpha\mathcal{H}(\boldsymbol{\pi}(\cdot|s_t))\right)\right],$ where $\alpha$ controls the exploration-exploitation trade-off. The corresponding soft Bellman residual to optimize Q-function $Q_{\phi}$ is
\begin{equation}
\label{eq:base_q}
    \begin{aligned}
        \mathcal{L}_Q(\phi)=
\mathbb{E}_{(s,a,r,s')\sim\mathcal{D}}
\left[\left(Q_\phi(s,a)-r-\gamma \left(\min\limits_{k=1,2}Q_{\tilde{\phi}_k}(s',a')-\alpha\sum_{i=1}^{N}\log \pi_i(a'_i|s')\right)\right)^2\right],
    \end{aligned}
\end{equation}
and the joint policy $\boldsymbol{\pi}_{\theta}$ is updated via
\begin{equation}
\label{eq:base_policy}
L_{\boldsymbol{\pi}}(\theta)=\mathbb{E}_{s\sim\mathcal{D},\mathbf{a}\sim\pi_{\theta}(\cdot|s)}[\alpha \log \boldsymbol{\pi}_{\theta}(\cdot|s)-Q_{\phi}(s,\mathbf{a})].
\end{equation}
The softmax Q-value~\citep{haarnoja2017reinforcement,pan2020softmax} provides a smooth alternative to the hard maximum by weighting actions according to their exponentiated Q-values, and its estimation via importance sampling is written as
\begin{equation}
\label{eq:softmax}
Q_{\text{soft}}(s',\pi)=\mathbb{E}_{a' \sim \pi}
\left[
\frac{\exp\left(\beta Q_{\tilde{\phi}}(s',a')\right)Q_{\tilde{\phi}}(s',a')}{\pi(a'|s')}
\right]/
\mathbb{E}_{a' \sim \pi}
\left[
\frac{\exp\left(\beta Q_{\tilde{\phi}}(s',a')\right)}{\pi(a'|s')}
\right].
\end{equation}

\subsection{Flow Matching}
Flow matching~\citep{lipmanflow,li2026reverse,li2026a} learns a continuous transport process that maps samples from a tractable source distribution to a desired data distribution. Let $x_0\sim p(x)$ denote a data sample and $x_1\sim\mathcal{N}(0,I_d)$ a source sample. A time-dependent path $x_t$ connects the two distributions over $t\in[0,1]$, with its instantaneous velocity given by $v(t,x_t)=\frac{\mathrm{d}x_t}{\mathrm{d}t}$.
Flow matching learns a parameterized velocity field $v_\theta(t,x_t)$ by minimizing
\begin{equation}
\min_{\theta}\
\mathbb{E}_{x_0,x_1,t}
\left[
\left\Vert
v_\theta(t,x_t)-v(t,x_t)
\right\Vert_2^2
\right],
\nonumber
\end{equation}
where $t\sim U([0,1])$. Once trained, the learned flow generates samples by solving the corresponding ordinary differential equation (ODE) $x_t=x_r+\int_r^t v_\theta(\tau,x_\tau)\,\mathrm{d}\tau$, which can be discretized using the first-order update:
\begin{equation}
    x_{t-\Delta t}=x_t-v_\theta(t,x_t)\Delta t\,.
\end{equation}

\section{Theoretical Bridge to OMAF}

While flow matching enables efficient generative modeling through continuous transport, applying flow-based policies to online MARL still presents several challenges.

First, entropy regularization is hindered by intractable policy likelihoods. In contrast to Gaussian policies, flow-based policies involve densities induced by multi-step generative processes, making likelihood evaluation difficult~\citep{lipmanflow}. This complicates entropy-based exploration in continuous multi-agent action spaces~\citep{zhong2024heterogeneous}.

Second, flow policies face a fundamental conflict between joint expressiveness and decentralized execution under CTDE. Joint flows capture inter-agent dependencies but couple agents, while independent flows preserve decentralization at the cost of essential coordination~\citep{zhu2024madiff}.

Finally, one-step generation exposes an intrinsic mismatch between expressive flow policies and online reward optimization. Aggressive step reduction can degrade action quality due to transport errors~\citep{li2026omad}, while RL must reshape the flow toward high-value joint actions~\citep{pan2021regularized,lirevisiting}. Efficiently reconciling expressive one-step generation with global coordination remains an open challenge in online MARL.

These challenges motivate our design of \underline{O}nline \underline{MA}RL via one-step \underline{F}low model (\textbf{OMAF}). We next provide theoretical guarantees and insights to characterize the proposed one-step flow policy and its optimization, offering a principled understanding of its expressiveness, value estimation stability, and coordination capability.

\subsection{Theoretical Guarantees}
\label{subsec:theoretical_guarantees}

We provide population-level theoretical guarantees for OMAF under practical approximations. We first establish convergence of flow policy optimization, showing that the learned policy approaches the soft-optimal decentralized policy. We then bound the approximation error induced by collapsing the flow trajectory into a single Euler step, providing a theoretical basis for one-step generation. Proofs are deferred to Appendix~\ref{appendix:proofs}.

\begin{theorem}[Approximate convergence to the soft-optimal decentralized policy]
\label{thm:soft_policy_convergence}
Let $\rho(\boldsymbol s)$ be the population observation distribution and let
$\boldsymbol\pi^\star$ be the factorized maximum-entropy optimal policy
induced by $Q^\star$.  Suppose that, uniformly over the flow-policy class,
\begin{equation}
\left|\mathcal L_{\pi}(\boldsymbol\theta)
-\overline{\mathcal L}_{\pi}(\boldsymbol\pi_{\boldsymbol\theta})\right|
\leq \varepsilon_{\mathrm{obj}}
\coloneq \varepsilon_Q+\alpha\varepsilon_{\ell},
\nonumber
\label{eq:objective_approximation_error}
\end{equation}
where $\varepsilon_Q$ bounds the critic error and $\varepsilon_{\ell}$ bounds
the discrepancy between the path score surrogate and the exact joint log-density.
Assume that the policy class has approximation error
\begin{equation}
\inf_{\boldsymbol\theta}
\mathbb E_{\boldsymbol s\sim\rho}
\!\left[
\operatorname{KL}\!\left(
\boldsymbol\pi_{\boldsymbol\theta}
\,\Vert\,\boldsymbol\pi^\star
\right)\right]\leq\varepsilon_{\mathrm{app}}.
\nonumber
\label{eq:flow_policy_approximation_error}
\end{equation}
For an iterate $\boldsymbol\theta_k$, define
$\varepsilon_{\mathrm{opt},k}\coloneq\mathcal L_{\pi}(\boldsymbol\theta_k)-
\inf_{\boldsymbol\theta}\mathcal L_{\pi}(\boldsymbol\theta)$.  Then,
\begin{equation}
\mathbb E_{\boldsymbol s\sim\rho}
\!\left[
\operatorname{KL}\!\left(
\boldsymbol\pi_{\boldsymbol\theta_k}
\,\Vert\,\boldsymbol\pi^\star
\right)\right]
\leq
\varepsilon_{\mathrm{app}}+
\frac{\varepsilon_{\mathrm{opt},k}+2\varepsilon_{\mathrm{obj}}}{\alpha}.
\label{eq:soft_policy_kl_bound}
\end{equation}
Consequently, $\boldsymbol\pi^\star$ is the unique policy-space minimizer of
$\overline{\mathcal L}_{\pi}$ when all three errors vanish.  Moreover, if
$\mathcal L_{\pi}$ is $L$-smooth, satisfies a local
Polyak--\L{}ojasiewicz inequality with constant $\mu>0$, and its unbiased
stochastic gradient has variance at most $\sigma_g^2$, then for
$0<\eta\leq 1/L$,
\begin{equation}
\mathbb E[\varepsilon_{\mathrm{opt},k}]
\leq (1-\eta\mu)^k\varepsilon_{\mathrm{opt},0}
+\frac{L\eta\sigma_g^2}{2\mu}.
\label{eq:sgd_optimization_error}
\end{equation}
\end{theorem}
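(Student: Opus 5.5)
The plan is to write the population objective as a scaled KL divergence plus a constant. Define the idealized objective
\[
\overline{\mathcal L}_{\pi}(\boldsymbol\pi)=\mathbb E_{\boldsymbol s\sim\rho}\,\mathbb E_{\mathbf a\sim\boldsymbol\pi(\cdot|\boldsymbol s)}\bigl[\alpha\log\boldsymbol\pi(\mathbf a|\boldsymbol s)-Q^\star(\boldsymbol s,\mathbf a)\bigr].
\]
Take $\boldsymbol\pi^\star(\mathbf a|\boldsymbol s)=\exp(Q^\star(\boldsymbol s,\mathbf a)/\alpha)/Z(\boldsymbol s)$, the Boltzmann policy, which is assumed factorized across agents. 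Adding and subtracting $\alpha\log\boldsymbol\pi^\star$ gives
\[
\overline{\mathcal L}_{\pi}(\boldsymbol\pi)=\alpha\,\mathbb E_{\rho}\bigl[\operatorname{KL}(\boldsymbol\pi\Vert\boldsymbol\pi^\star)\bigr]-\alpha\,\mathbb E_\rho[\log Z(\boldsymbol s)].
\]
The second term is independent of $\boldsymbol\pi$. This is the standard soft-policy-improvement identity.

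\begin{itemize}
\item \textbf{Uniqueness.} Since $\operatorname{KL}\ge0$ with equality iff $\boldsymbol\pi=\boldsymbol\pi^\star$ (for $\rho$-a.e.\ $\boldsymbol s$), $\boldsymbol\pi^\star$ is the unique policy-space minimizer of $\overline{\mathcal L}_\pi$. This gives the "consequently" clause once the three errors vanish, because then $\mathcal L_\pi$ and $\overline{\mathcal L}_\pi$ coincide on the class and the class contains $\boldsymbol\pi^\star$ in KL.

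\item \textbf{KL bound \eqref{eq:soft_policy_kl_bound}.} Fix a near-minimizer $\boldsymbol\theta'$ whose KL is at most $\varepsilon_{\mathrm{app}}+\delta$. Then
\begin{align*}
\alpha\,\mathbb E_\rho\operatorname{KL}(\boldsymbol\pi_{\boldsymbol\theta_k}\Vert\boldsymbol\pi^\star)
&=\overline{\mathcal L}_\pi(\boldsymbol\pi_{\boldsymbol\theta_k})+C
\le \mathcal L_\pi(\boldsymbol\theta_k)+\varepsilon_{\mathrm{obj}}+C\\
&=\inf_{\boldsymbol\theta}\mathcal L_\pi(\boldsymbol\theta)+\varepsilon_{\mathrm{opt},k}+\varepsilon_{\mathrm{obj}}+C
\le \mathcal L_\pi(\boldsymbol\theta')+\varepsilon_{\mathrm{opt},k}+\varepsilon_{\mathrm{obj}}+C\\
&\le \overline{\mathcal L}_\pi(\boldsymbol\pi_{\boldsymbol\theta'})+\varepsilon_{\mathrm{opt},k}+2\varepsilon_{\mathrm{obj}}+C .
\end{align*}
The last line equals $\alpha\,\mathbb E_\rho\operatorname{KL}(\boldsymbol\pi_{\boldsymbol\theta'}\Vert\boldsymbol\pi^\star)+\varepsilon_{\mathrm{opt},k}+2\varepsilon_{\mathrm{obj}}$. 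Divide by $\alpha$ and let $\delta\to0$. Only the uniform closeness assumption is used, applied at both $\boldsymbol\theta_k$ and $\boldsymbol\theta'$, which produces the factor $2$.

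\item \textbf{Why $\varepsilon_{\mathrm{obj}}=\varepsilon_Q+\alpha\varepsilon_\ell$ is the right constant.} The practical loss replaces $Q^\star$ by the critic and $\log\boldsymbol\pi$ by the path score surrogate. Each substitution changes the expectation by at most $\varepsilon_Q$ and $\alpha\varepsilon_\ell$ respectively, via the triangle inequality.
\end{itemize}

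\textbf{Rate \eqref{eq:sgd_optimization_error}.} This is the standard SGD-under-PL argument. Write the update as $\boldsymbol\theta_{k+1}=\boldsymbol\theta_k-\eta g_k$ with $\mathbb E g_k=\nabla\mathcal L_\pi$ and variance at most $\sigma_g^2$.
\begin{enumerate}
\item By $L$-smoothness,
\[
\mathbb E\,\mathcal L_\pi(\boldsymbol\theta_{k+1})\le\mathcal L_\pi(\boldsymbol\theta_k)-\eta\Bigl(1-\tfrac{L\eta}{2}\Bigr)\|\nabla\mathcal L_\pi\|^2+\tfrac{L\eta^2}{2}\sigma_g^2 .
\]
\item With $\eta\le1/L$ the gradient coefficient is at least $\eta/2$. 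The PL inequality $\|\nabla\mathcal L_\pi\|^2\ge2\mu\,\varepsilon_{\mathrm{opt}}$ then gives
\[
\mathbb E\,\varepsilon_{\mathrm{opt},k+1}\le(1-\eta\mu)\,\mathbb E\,\varepsilon_{\mathrm{opt},k}+\tfrac{L\eta^2\sigma_g^2}{2}.
\]
\item Unrolling and summing the geometric series bounds the noise floor by $\frac{L\eta^2\sigma_g^2}{2}\cdot\frac{1}{\eta\mu}=\frac{L\eta\sigma_g^2}{2\mu}$.
\end{enumerate}

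\textbf{Main obstacle.} The delicate point is the word \emph{local} in the PL assumption. One must assume the iterates remain in the region where PL holds, as an invariance hypothesis, or else read the PL constant as holding along the trajectory. I would state this explicitly rather than attempt to prove it. A secondary subtlety is well-definedness of $\boldsymbol\pi^\star$ as a factorized Boltzmann policy. If $\exp(Q^\star/\alpha)$ does not factorize, the same argument goes through with $\boldsymbol\pi^\star$ read as the KL-projection onto factorized policies, but the identity $\overline{\mathcal L}_\pi=\alpha\operatorname{KL}+C$ then holds only up to this redefinition. I would take the statement's "factorized maximum-entropy optimal policy" as meaning the exact Boltzmann form.
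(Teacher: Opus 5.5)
Your proposal is correct and follows the paper's proof essentially step for step. It uses the Gibbs/KL variational identity for the ideal actor objective, the four-inequality sandwich that applies the uniform objective error at $\boldsymbol\theta_k$ and at a near-optimal comparator (which produces $2\varepsilon_{\mathrm{obj}}$), and the standard $L$-smooth plus PL recursion for SGD unrolled as a geometric series; your two caveats, that PL must hold along the iterates and that $\exp(Q^\star/\alpha)/Z$ must itself be factorized, are assumptions the paper's proof also makes tacitly without addressing them.
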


This theorem establishes that the flow policy converges to the soft-optimal policy up to approximation and optimization errors. The next result bounds the single-step Euler approximation error of the stochastic flow policy, where $z\sim\mathcal N(0,I)$ is the joint base noise and $a_t^\star$ denotes the optimal latent flow characteristic induced by the optimal policy $\boldsymbol{\pi}^{*}$.

\begin{theorem}[One-step near-optimality of the stochastic flow policy]
\label{thm:one_step_optimality}
Let $v^\star$ be the velocity field of the optimal latent flow, with material acceleration
$D_tv^\star\coloneq\partial_tv^\star+v^\star\nabla_a v^\star$. Assume that the target flow has bounded temporal variation and that the learned initial velocity satisfies bounded mean approximation and stochastic errors:
\begin{equation}
\begin{aligned}
\left(\mathbb E_{\boldsymbol s,z}
\left[\Vert D_t v^\star(a_t^\star,t\mid\boldsymbol s)\Vert_2^2\right]\right)^{1/2}
&\leq \kappa,\qquad t\in[0,1],\\
\left(\mathbb E_{\boldsymbol s,z}
\left[\Vert\mu_{\boldsymbol\theta}(z,\boldsymbol s,0)
-v^\star(z,0\mid\boldsymbol s)\Vert_2^2\right]\right)^{1/2}
&\leq \varepsilon_\mu,\\
\left(\mathbb E_{\boldsymbol s,z,\boldsymbol\xi}
\left[\Vert\sigma_{\boldsymbol\theta}(z,\boldsymbol s,0)
\odot\boldsymbol\xi\Vert_2^2\right]\right)^{1/2}
&\leq \varepsilon_\sigma,
\qquad \boldsymbol\xi\sim\mathcal N(0,I).
\end{aligned}
\nonumber
\label{eq:velocity_bound_error}
\end{equation}
Let $\boldsymbol\pi^{(1)}_{\boldsymbol\theta}$ be induced by the stochastic
one-step action
\begin{equation}
\widehat{\boldsymbol a}
=\tanh\!\left(z+\mu_{\boldsymbol\theta}(z,\boldsymbol s,0)
+\sigma_{\boldsymbol\theta}(z,\boldsymbol s,0)\odot\boldsymbol\xi\right).
\label{eq:one_step_stochastic_action}
\nonumber
\end{equation}
Then, with $\Delta_1\coloneq\varepsilon_\mu+\varepsilon_\sigma+\kappa/2$,
\begin{equation}
\left(
\mathbb E_{\boldsymbol s\sim\rho}\!\left[
W_2^2\!\left(
\boldsymbol\pi^{(1)}_{\boldsymbol\theta}(\cdot\mid\boldsymbol s),
\boldsymbol\pi^\star(\cdot\mid\boldsymbol s)
\right)
\right]\right)^{1/2}\leq\Delta_1.
\label{eq:one_step_wasserstein_bound}
\end{equation}
If $Q^\star(\boldsymbol s,\boldsymbol a)$ is $K_Q$-Lipschitz in the joint action and
the discounted observation occupancy of
$\boldsymbol\pi^{(1)}_{\boldsymbol\theta}$ is dominated by $\rho$ with
constant $C_\rho$, then
\begin{equation}
J(\boldsymbol\pi^\star)-J(\boldsymbol\pi^{(1)}_{\boldsymbol\theta})
\leq \frac{C_\rho K_Q}{1-\gamma}\Delta_1.
\label{eq:one_step_performance_bound}
\end{equation}
Thus, exact one-step recovery follows in the special case
$\varepsilon_\mu=\varepsilon_\sigma=\kappa=0$; otherwise, the performance
gap is controlled by the velocity-estimation error, the injected velocity noise, and
the curvature of the optimal flow trajectory.
\end{theorem}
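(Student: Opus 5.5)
The plan is to build a coupling between $\boldsymbol\pi^{(1)}_{\boldsymbol\theta}(\cdot\mid\boldsymbol s)$ and $\boldsymbol\pi^\star(\cdot\mid\boldsymbol s)$ that uses the same base noise $z$. The $W_2$ distance is then bounded by the $L^2$ norm of the pathwise difference. The paper's convention is that $a_1^\star=z$ is the source and the action is the data end, reached by integrating the optimal latent flow across the unit interval and then applying $\tanh$. The optimal latent characteristic therefore satisfies
\begin{equation*}
a^\star_{\mathrm{end}}=z+\int_0^1 \frac{\mathrm d}{\mathrm d t}a_t^\star\,\mathrm dt .
\end{equation*}
Along the characteristic, $\frac{\mathrm d}{\mathrm dt}v^\star(a_t^\star,t)=D_tv^\star(a_t^\star,t)$. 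Taylor's formula with integral remainder then gives
\begin{equation*}
a^\star_{\mathrm{end}}=z+v^\star(z,0\mid\boldsymbol s)+R,\qquad R=\int_0^1(1-t)\,D_tv^\star(a_t^\star,t\mid\boldsymbol s)\,\mathrm dt ,
\end{equation*}
with the time orientation and sign absorbed into the definition of the velocity. Minkowski's integral inequality together with the bound $\kappa$ gives $\|R\|_{L^2(\boldsymbol s,z)}\le\int_0^1(1-t)\kappa\,\mathrm dt=\kappa/2$. This is the source of the $\kappa/2$ term.

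Next I compare the two pre-$\tanh$ latents:
\begin{equation*}
z+\mu_{\boldsymbol\theta}+\sigma_{\boldsymbol\theta}\odot\boldsymbol\xi-a^\star_{\mathrm{end}}=(\mu_{\boldsymbol\theta}-v^\star(z,0))+\sigma_{\boldsymbol\theta}\odot\boldsymbol\xi-R .
\end{equation*}
Minkowski's inequality in $L^2(\boldsymbol s,z,\boldsymbol\xi)$ bounds this difference by $\varepsilon_\mu+\varepsilon_\sigma+\kappa/2=\Delta_1$. Since $\tanh$ is $1$-Lipschitz coordinatewise, the post-$\tanh$ actions differ by at most the same amount in $L^2$. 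Taking conditional expectations given $\boldsymbol s$ produces a coupling whose conditional second moment dominates $W_2^2(\boldsymbol\pi^{(1)}_{\boldsymbol\theta}(\cdot\mid\boldsymbol s),\boldsymbol\pi^\star(\cdot\mid\boldsymbol s))$. Averaging over $\boldsymbol s\sim\rho$ and taking square roots yields \eqref{eq:one_step_wasserstein_bound}. Here I need $\boldsymbol\pi^\star$ to be exactly the pushforward of the optimal latent flow under $\tanh$; this is part of the definition of $a_t^\star$.

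For the performance bound \eqref{eq:one_step_performance_bound}, I will use the performance difference lemma with the roles arranged so that the occupancy belongs to $\boldsymbol\pi^{(1)}_{\boldsymbol\theta}$:
\begin{equation*}
J(\boldsymbol\pi^\star)-J(\boldsymbol\pi^{(1)}_{\boldsymbol\theta})=\frac{1}{1-\gamma}\mathbb E_{\boldsymbol s\sim d^{\pi^{(1)}}}\left[\mathbb E_{\boldsymbol\pi^\star}Q^\star(\boldsymbol s,\cdot)-\mathbb E_{\boldsymbol\pi^{(1)}_{\boldsymbol\theta}}Q^\star(\boldsymbol s,\cdot)\right].
\end{equation*}
For the entropy-regularized objective, the soft value adds an entropy term that must either be dropped or be treated as part of $Q^\star$. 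I will state the bound for the reward objective using $Q^\star$, and note this caveat. By Kantorovich--Rubinstein duality, the inner difference is at most $K_Q W_1\le K_Q W_2$. Changing measure to $\rho$ costs the factor $C_\rho$, and Jensen's inequality gives $\mathbb E_\rho W_2\le(\mathbb E_\rho W_2^2)^{1/2}\le\Delta_1$.

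I expect the main obstacle to be the first step: rigorously justifying the Taylor remainder along characteristics. This requires enough regularity of $v^\star$ for the chain rule $\frac{\mathrm d}{\mathrm dt}v^\star(a_t^\star,t)=D_tv^\star$ to hold. It also requires the Minkowski/Fubini interchange, which relies on the uniform-in-$t$ moment bound $\kappa$. I would assume $v^\star$ is $C^1$ with the ODE well-posed, so that the flow map is a measurable bijection and the coupling is legitimate. Once that is in place, the remaining steps are routine norm inequalities.
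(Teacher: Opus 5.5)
Your proposal is correct and follows the paper's proof step for step. The shared steps are the same-noise coupling, the remainder $\int_0^1(1-t)D_tv^\star\,\mathrm dt$ (which equals the paper's $\int_0^1\int_0^t D_sv^\star\,\mathrm ds\,\mathrm dt$ by Fubini) bounded by Minkowski to give $\kappa/2$, the $1$-Lipschitz $\tanh$ step, and the performance-difference plus Kantorovich--Rubinstein argument; the only difference is the change of measure, where the paper uses Cauchy--Schwarz to get the slightly sharper factor $\sqrt{C_\rho}\le C_\rho$, whereas your sup-bound plus Jensen gives exactly the stated $C_\rho$.
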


This theorem establishes the near-optimality of one-step flow policies under bounded approximation errors. Together, the two theorems underscore the critical role of accurate value estimation and one-step action generation, motivating OMAF as a principled framework specifically designed to enable efficient one-step flow policy learning.
\section{The Online MARL via one-step Flow model  Method}
\label{sec:transformer_flow_policy}

This section introduces our flow policy algorithm for online multi-agent reinforcement learning, with an overview shown in Figure~\ref{fig:omaf}. To enable efficient one-step flow policy learning in online MARL, OMAF incorporates two key designs: (i) a Transformer-based velocity network that directly parameterizes the flow policy, enabling expressive policy modeling and one-step action generation without iterative sampling or policy distillation; and (ii) a stable joint optimization scheme that combines softmax Q-value estimation with joint policy optimization, providing reliable value guidance while facilitating coordinated policy learning. We detail these components below.

\begin{figure}[htbp]
\begin{center}
\includegraphics[width=\linewidth]{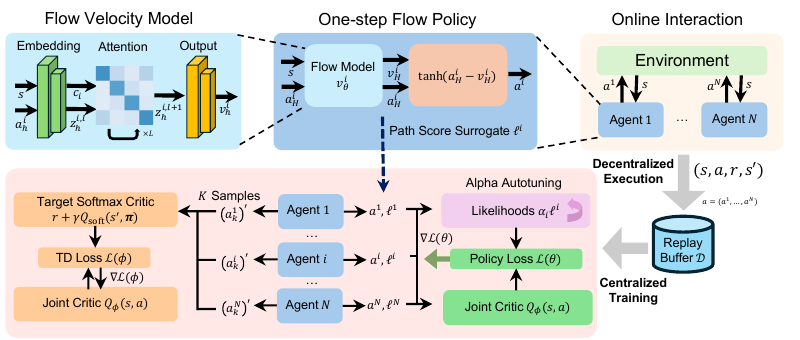}
\end{center}    
\caption{The CTDE framework of OMAF. Each agent generates a one-step flow action $a^i$ and path score surrogate $\ell^i$ through embedding attention (up), while jointly optimizes the flow policies (down) with a shared critic $Q_\phi(s,a)$ and synchronized policy loss $\mathcal{L}(\theta)$.}
\label{fig:omaf}
\end{figure}

OMAF establishes a powerful online flow-based MARL framework that integrates expressive flow policy, efficient one-step generation, and effective multi-agent coordination. 
While existing MARL methods predominantly use Gaussian policies, e.g., HARL~\citep{zhong2024heterogeneous}, OMAF adopts a powerful flow policy to capture expressiveness and complicated action distributions. Compared with diffusion-based methods such as OMAD~\citep{li2026omad}, OMAF replaces costly diffusion-driven iterative denoising with efficient one-step flow generation, achieving expressive policy modeling without sacrificing generation efficiency.

\subsection{Decentralized Flow Policy Formulation}\label{method:flow}
To extend flow-based policies to multi-agent domains, we employ a factorized policy
$\bm{\pi}_{\theta}({a}|s)=\prod_{i=1}^N\pi_{\theta_i}(a^i|s)$
~\citep{zhong2024heterogeneous}, enabling decentralized execution while preserving policy expressiveness. For each agent $i$, we parameterize the velocity field $v_{\theta_i}(a_\tau^i,s,\tau)$ with a Transformer-based network~\citep{vaswani2017attention} (up in Figure~\ref{fig:omaf}). Given $a_1^i\sim\mathcal{N}(0,I)$, the flow trajectory follows
$\frac{\mathrm{d}a_\tau^i}{\mathrm{d}\tau}=v_{\theta_i}(a_\tau^i,s,\tau)$,
where $\tau\in[0,1]$, and is discretized via the first-order Euler update:
\begin{equation}
\label{eq:flow_policy}
a_{\tau-\Delta \tau}^i=a_\tau^i-v_{\theta_i}(a_\tau^i,s,\tau)\Delta \tau.
\end{equation}
Finally, the output action is $a^i=\mathrm{tanh}(a_0^i)$. In OMAF, we use a single Euler step to directly generate the action, i.e.,
$a^i=\mathrm{tanh}(a_1^i-v_{\theta_i}(a_1^i,s,1))$.

Since the exact likelihood is intractable, we approximate it using a path score surrogate~\citep{zhang2026sac} for policy optimization (see Appendix~\ref{appendix:flow_policy} for details). Denote the path density as $p_{\theta_i}(a_{\tau_0^i},\cdots,a_{\tau_H^i})$ for $H$-step discretization, the path score surrogate $\ell^i$ is defined as
\begin{equation}
\ell^i\coloneq p_{\theta_i}(a^i|s)=
\int
\cdots
\int
p_{\theta_i}(a_{\tau_0^i},\cdots,a_{\tau_H^i})
\mathrm{d}a_{\tau_0}^i
\cdots
\mathrm{d}a_{\tau_{H-1}}^i.
\end{equation}

The direct integral is difficult to evaluate, so we employ a tractable approximation to its exact value, with the formulation and analysis detailed in Appendix~\ref{appendix:flow_policy}. This enables efficient policy optimization while retaining expressive flow-based modeling and one-step action generation.

\subsection{Efficient Online Centralized Training of the Flow Policy}
We next formulate the centralized training objective for OMAF, where the flow policy is optimized jointly across agents with entropy regularization to encourage both high-value actions and sufficient exploration (down in Figure~\ref{fig:omaf}). 

\paragraph{Softmax Q-value Estimation.}
The efficient training of the one-step flow policy places a stronger demand on accurate Q-value estimation, as the critic directly guides policy updates. To obtain more reliable Q-value estimates, based on~\Eqref{eq:base_q} and~\Eqref{eq:softmax}, we introduce the centralized critic using the CrossQ~\citep{bhattcrossq} and the softmax Bellman operator~\citep{pan2020softmax}
\begin{equation}
\label{eq:valueloss}
\begin{aligned}
\mathcal{L}_Q(\phi)=
\mathbb{E}
\left[
\left(
r+\gamma Q_{\mathrm{soft}}(s',\boldsymbol{\pi})-Q_\phi(s,{a})
\right)^2
\right],
\end{aligned}
\end{equation}
where $Q_{\mathrm{soft}}$ is approximated using $K$ joint-action samples
$\{{a}'_k\}_{k=1}^{K}$ generated by the flow policies:
\begin{equation}
Q_{\mathrm{soft}}(s',\boldsymbol{\pi})
\approx
\sum_{k=1}^{K}
w_k
\left(
q'_k-e'_k
\right) .
\nonumber
\end{equation}
Specifically, these terms are defined as
\begin{equation}
q'_k=\min_{c\in\{1,2\}}Q_{\phi_c}(s',{a}'_k),
\qquad
e'_k=\sum_{i=1}^{N}\alpha_i{\ell}^{\,i}_k,
\qquad
w_k=
\frac{\exp(\beta q'_k)}
{\sum_{l=1}^{K}\exp(\beta q'_l)}.
\nonumber
\end{equation}
Here $q'_k$ denotes the original Q-value of the $k$-th joint-action sample, $e'_k$ denotes the aggregated path score ${\ell}_k^i$ across agents via the weights $\alpha_i$, and $w_k$ denotes the $\beta$-softmax weight assigned to the $k$-th sample based on its Q-value. Together, softmax Q-value estimation and the path score provide complementary guidance for one-step flow learning: accurate Q-values enable effective policy improvement with few steps, while the path score facilitates efficient optimization and exploration.

\paragraph{Synchronized flow policy optimization.}
To facilitate effective coordination, based on~\Eqref{eq:base_policy}, we synchronously optimize all flow policies under a centralized objective:
\begin{equation}
  \mathcal{L}_{\boldsymbol{\pi}}(\boldsymbol{\theta})=\mathbb{E}_{s\sim\mathcal{D},a_i\sim\boldsymbol{\pi}_{{\theta_i}}(\cdot|s)}
  \left[
    \sum_{i=1}^{N}\alpha_i{\ell}^{i}
    -
    \min_{c\in\{1,2\}}
    Q_{\phi_c}(s,a)
  \right].
  \label{eq:flow_actor_loss}
\end{equation}
Here $a=(a_1,\cdots,a_N)$ denotes the joint action sampled from all flow policies $\pi_{\theta_i}$. Jointly optimizing the policies with the centralized Q-function accounts for inter-agent behaviors, promoting coordinated policy improvement. Meanwhile, tractable path score surrogate enables direct evaluation and approximated maximum-entropy optimization compared with diffusion policies.

\paragraph{Agent-wise temperature auto-tuning.}
To provide fine-grained exploration control for each agent, we adaptively tune an individual entropy coefficient $\alpha_i$. Denote $\overline{\ell}^{i}\coloneq\mathbb{E}[{\ell}^{i}]$ as the expected path score surrogate of agent $i$. The temperature is optimized by
\begin{equation}
  \mathcal{L}(\alpha)
  =
  \frac{1}{N}\sum_{i=1}^{N}
  \alpha_i
  \left(
    \mathcal{H}^{i}_{\mathrm{target}}-\operatorname{sg}(\overline{\ell}^{i})
  \right),
  \label{eq:temperature_loss}
\end{equation}
where the expected path score surrogate is detached from the temperature update. This
adaptive scheme automatically adjusts $\alpha_i$ to maintain the desired
exploration level during training. 

Overall, online MARL faces severe policy instability as the joint state distribution evolves rapidly, making expressive policy modeling and robust value guidance crucial under limited samples. The flow policy provides expressive one-step parameterization, while the path score enables efficient policy optimization and exploration. Building on this foundation, the softmax Q-value objective provides concise and effective guidance against distributional shift, the synchronized policy objective promotes coordinated improvement across agents, and the agent-specific temperature objectives enable fine-grained exploration control. Together, these components form a unified framework for efficient and stable one-step flow policy learning in online MARL.

\subsection{OMAF Algorithm}

\begin{algorithm}[htbp]
\caption{Online Multi-Agent Flow Policy (OMAF)}
\label{alg:omaf}

\textbf{Initialize:} Critic networks $Q_{\phi, 1}$, $Q_{\phi, 2}$, target networks $Q_{\bar{\phi}, 1}, Q_{\bar\phi, 2}$, flow policy $\pi_{\theta_i}$ for agents
$i=1,2,\ldots,N$, temperature $\{\alpha_i\}_{i=1}^{N}$, policy delay $d_l$,
target entropy $\{\mathcal{H}_{\mathrm{target}}^i\}_{i=1}^{N}$,
replay buffer $\mathcal{D}$, and the threshold buffer size
$L_{\mathrm{init}}$.
\textcolor{blue}{// Initialization}\label{algline:initialization}

\For{$m=1$ \KwTo $M$ steps}{
    \If{Buffer length $L_{\mathcal{D}} < L_{\mathrm{init}}$}{
        Sample a trajectory $(s_t,a_t,r_t,s_{t+1})$ using the random policy and insert it into $\mathcal{D}$.\label{algline:samplerandom}
    }
    \Else{
        Sample a trajectory $(s_t,a_t,r_t,s_{t+1})$
     using $\pi_{\theta_i}$ and insert it into
    $\mathcal{D}$.\label{algline:samplepolicy}
    }
    \textcolor{blue}{// Trajectory Generation}
    
        Sample a mini-batch $B=\{(s,a,r,s')\}$ from $\mathcal{D}$.
        \textcolor{blue}{// Sampling}\label{algline:sampleminibatch}
        
        Sample $K$ candidates
        $a'_k=((a^1_k)',\ldots,(a^N_k)')$ using $\pi_{\theta_i}$,
        calculate $\mathcal{L}(\phi)$ using~\Eqref{eq:valueloss} and optimize $\phi$.
        \textcolor{blue}{// Critic Optimization}\label{algline:criticopt}

        Update target networks via
        $\bar{\phi}_i \leftarrow
        \rho\bar{\phi}_i + (1-\rho)\phi_i$.
        \textcolor{blue}{// Target Network Optimization}\label{algline:targetcritic}

        \If{$m \bmod d_l = 0$}{
            Sample $a=(a^1,\ldots,a^N)$ using $\pi_{\theta_i}$, and calculate $(\ell_{\pi_{\theta_1}},\ldots,\ell_{\pi_{\theta_N}})$ simultaneously,
            calculate $\mathcal{L}(\theta)$ using~\Eqref{eq:flow_actor_loss} and jointly
            optimize $\{\theta_i\}_{i=1}^{N}$.
            \textcolor{blue}{// Synchronized Flow Policy Optimization}\label{algline:policyopt}
            
            Optimize the temperature
            $\alpha_i$ via minimizing~\Eqref{eq:temperature_loss}.
            \textcolor{blue}{// Temperature Optimization}\label{algline:temperatureopt}
            }
    
}
\end{algorithm}

The training procedure of the proposed Online Multi-Agent Flow Policy (OMAF) is summarized in Algorithm~\ref{alg:omaf}. Line~\ref{algline:initialization} initializes the model parameters and replay buffer. Lines~\ref{algline:samplerandom}-~\ref{algline:samplepolicy} collect trajectories using random policies at the beginning and the learned flow policies thereafter. Once the replay buffer is sufficiently populated, Line~\ref{algline:criticopt} updates the centralized critics using $K$ joint actions sampled from the current flow policies and the softmax Q-value objective, followed by a soft update of the target networks in Line~\ref{algline:targetcritic}. Every $d_l$ steps, Line~\ref{algline:policyopt} jointly optimizes all flow policies using the centralized Q-function, while Line~\ref{algline:temperatureopt} tunes the agent-wise entropy temperatures.

Overall, OMAF establishes a pioneering flow-based approach to online MARL that jointly addresses the expressiveness-efficiency trade-off in generative policy learning. Its one-step flow policy models complex action distributions without the iterative sampling required by diffusion policies, while the path score surrogate enables effective maximum-entropy optimization and captures complex multi-agent coordination. A detailed discussion of the baseline algorithms is provided in Appendix~\ref{appendix:discussion}. We next conduct extensive experiments to evaluate whether these designs translate into improved policy performance and training efficiency across a diverse set of multi-agent tasks.

\section{Experiments}

We evaluate OMAF on diverse continuous-control multi-agent benchmarks and compare it with state-of-the-art baselines in terms of performance and efficiency. We further visualize the learned policies to examine their expressiveness and coordination capability from different perspectives. Finally, we conduct ablation studies to evaluate the contribution of each key component of OMAF.

\subsection{Experimental Setup}
\textbf{Enviroments.} We benchmark OMAF on two widely adopted continuous-control MARL suites: Multi-Agent Particle Environments (MPE)~\citep{lowe2017multi} and Multi-Agent MuJoCo (MAMuJoCo)~\citep{peng2021facmac}. We use Cooperative Navigation in MPE, and HalfCheetah, Ant, Walker, Hopper, and Swimmer in MAMuJoCo, covering particle coordination and high-dimensional locomotion tasks. Detailed environment configurations are provided in Appendix~\ref{appendix:env}.

\textbf{Baselines.}
We consider four representative online MARL baselines for comparison.
HATD3~\citep{zhong2024heterogeneous} and HASAC~\citep{liumaximum}
are two widely used actor-critic methods for online MARL. OMAD~\citep{li2026omad} represents the state-of-the-art multi-agent diffusion policy, while
MAFlowRL~\citep{lyu2026flow} and MAMFPO~\citep{dong2026mean} extend
flow-based RL methods to the multi-agent setting. All methods are trained with $3$ random seeds, and
we report the mean performance and standard deviation over $10$ evaluation
episodes. Implementation details, including hyperparameters and network
architectures, are provided in Appendix~\ref{appendix:parameters}.
\subsection{Experiment Results}
\begin{figure}[ht]
    \begin{center}
    \includegraphics[width=\textwidth]{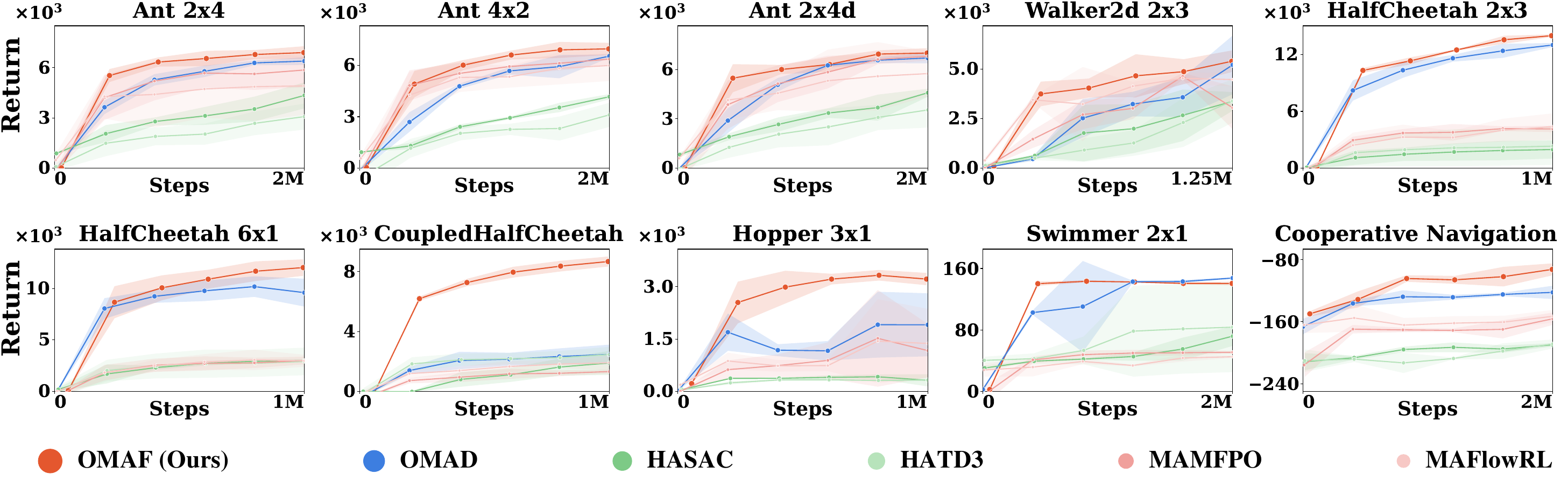}
    \caption{Performance comparison of OMAF against representative online MARL methods HATD3 and HASAC, diffusion-based OMAD, and extensions of flow-based methods MAFlowRL and MAMFPO on the MPE and MAMuJoCo benchmarks. Curves show average episode return over training steps across 3 random seeds, with shaded regions denoting one standard deviation. OMAF achieves superior performance, validating one-step flow policies for online MARL.}
    \label{fig:results}
    \end{center}
\end{figure}

The overall performance of OMAF is presented in Figure~\ref{fig:results}. We evaluate all methods under a limited training budget of fewer than $2\times10^6$ environment interaction steps to assess both performance and sample efficiency. Across all $10$ MPE and MAMuJoCo tasks, OMAF consistently achieves superior performance compared with the baseline algorithms. This advantage holds for both the MPE tasks with complex coordination and the higher-dimensional MAMuJoCo tasks, demonstrating that OMAF can achieve strong performance with substantially fewer environment interactions. Besides, OMAF achieves up to a $10.5\times$ improvement in sample efficiency, reaching baseline-level performance with substantially fewer environment interactions.

\begin{wrapfigure}{r}{0.38\textwidth}
    \centering
    \includegraphics[width=\linewidth]{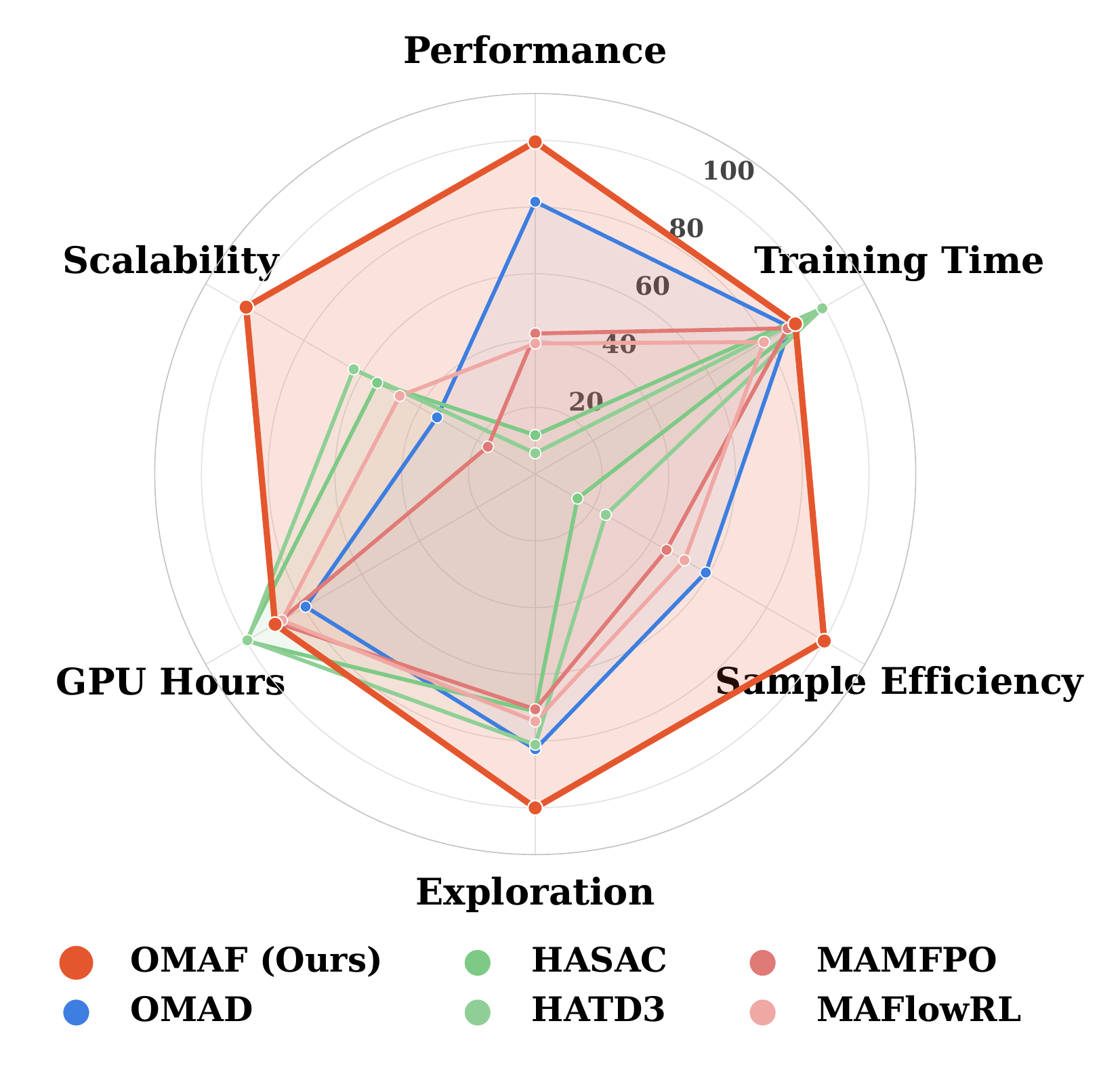}
    \caption{OMAF achieves superior performance across different metrics shown in the radar chart.}
    \label{fig:radar}
\end{wrapfigure}

Compared with Gaussian-policy methods, including HASAC and HATD3~\citep{liumaximum,zhong2024heterogeneous}, OMAF achieves superior performance, demonstrating the advantage of expressive flow-based policies. In contrast to diffusion-based OMAD~\citep{li2026omad}, OMAF achieves better performance without iterative denoising, balancing policy expressiveness and generation efficiency. Against representative multi-agent flow-policy methods, MAFlowRL~\citep{lyu2026flow} and MAMFPO~\citep{dong2026mean}, OMAF consistently performs better, highlighting the effectiveness of our specific flow-policy design and synchronized value and policy optimization. 

As shown in Figure~\ref{fig:radar} (details refer to Appendix~\ref{appendix:exp}), OMAF achieves superior and well-rounded performance across multiple metrics, outperforming representative baselines algorithms. OMAF consistently performs well across task performance, training and sample efficiency, exploration, and scalability, achieving stronger policies with substantially lower computational resource requirements than diffusion-based OMAD~\citep{li2026omad}. Such computational efficiency is crucial for practical and scalable online MARL, where training and inference cost can become a major bottleneck.

\subsection{Ablation Study}
\begin{figure}[ht]
    \centering
    \begin{subfigure}{0.31\textwidth}
        \centering
        \includegraphics[width=\linewidth]{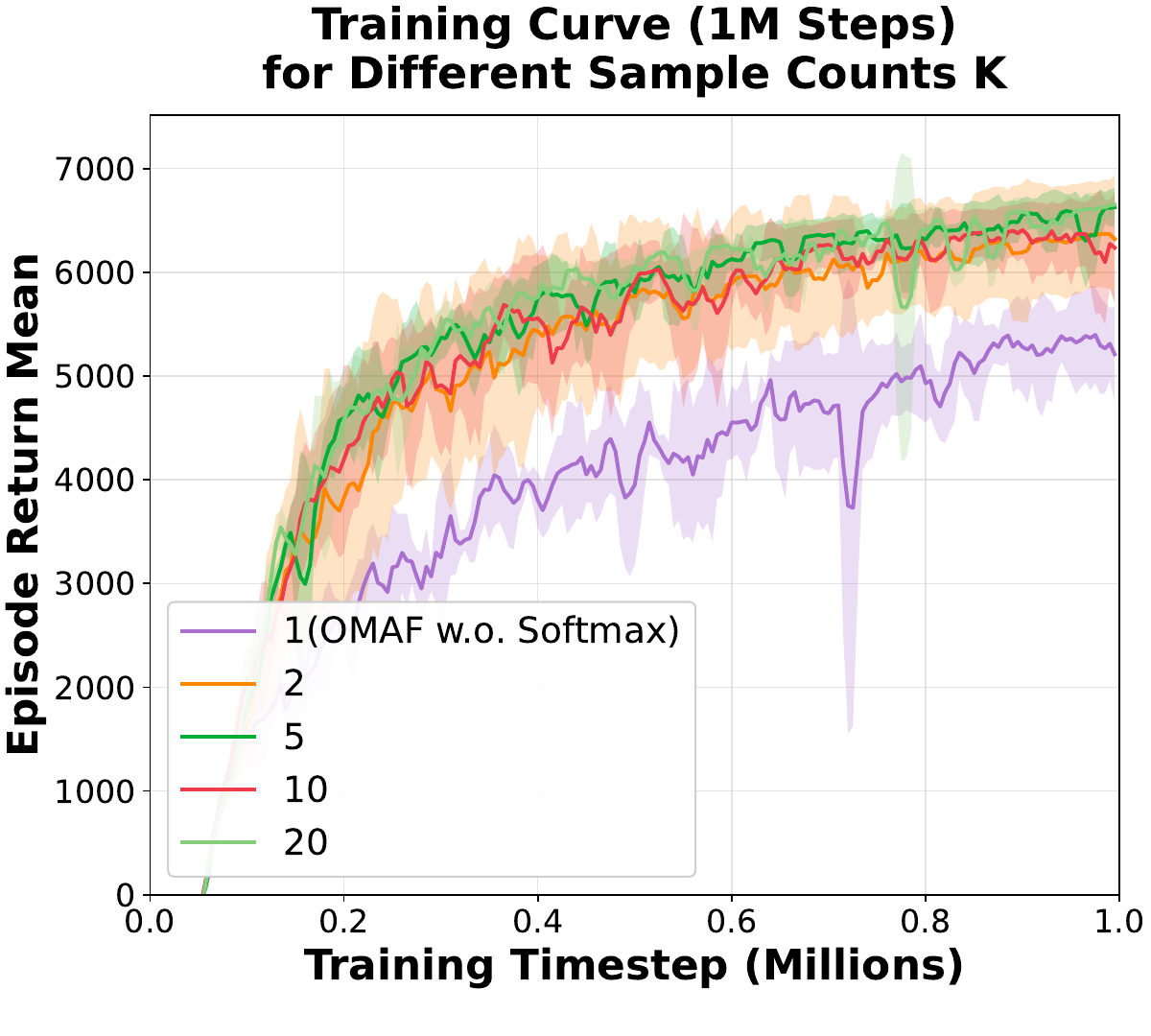}
        \caption{Sample count $K$}
        \label{fig:k}
    \end{subfigure}
    \hfill
    \begin{subfigure}{0.31\textwidth}
        \centering
        \includegraphics[width=\linewidth]{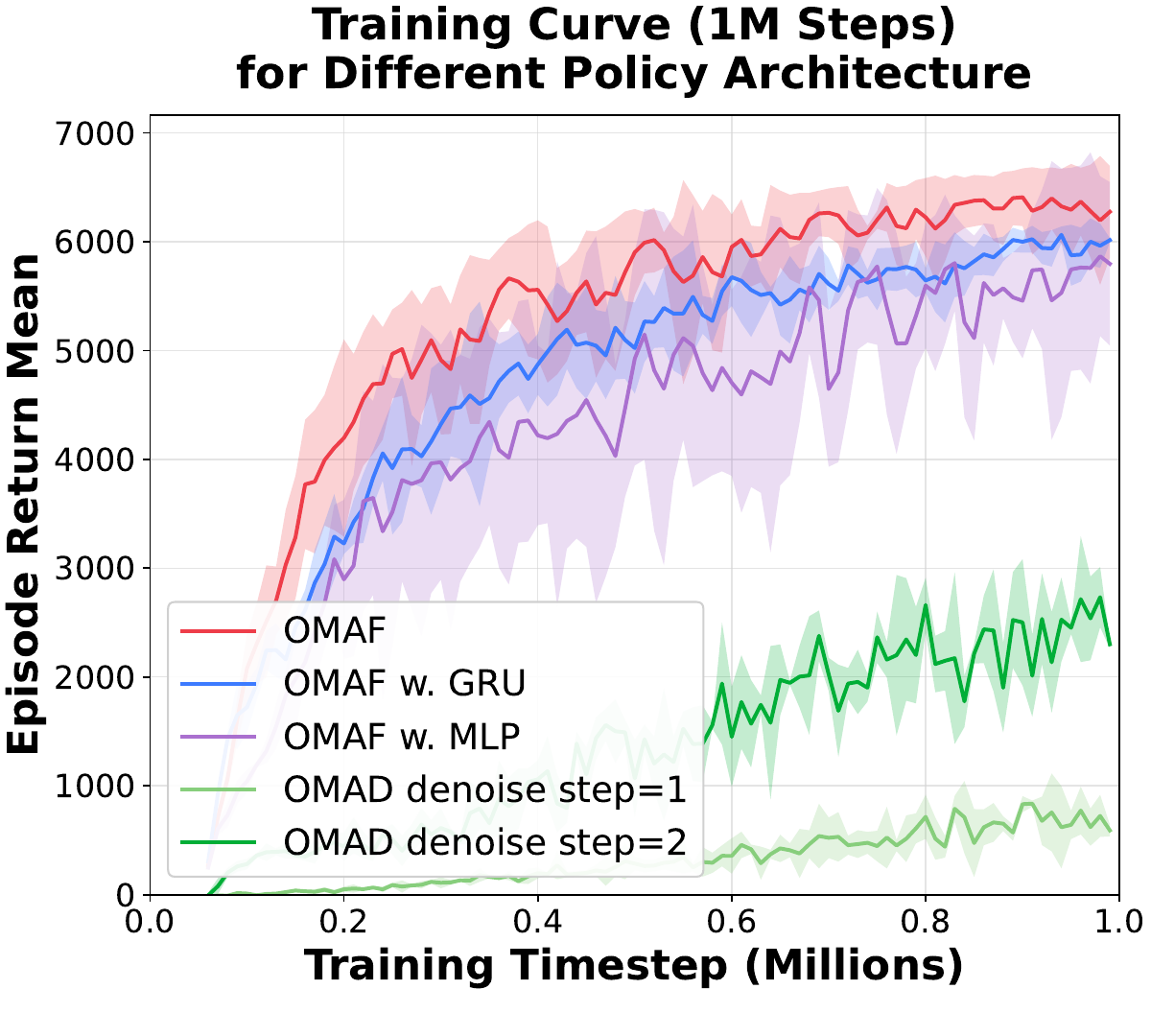}
        \caption{Policy architecture}
        \label{fig:arch}
    \end{subfigure}
    \hfill
    \begin{subfigure}{0.31\textwidth}
        \centering
        \includegraphics[width=\linewidth]{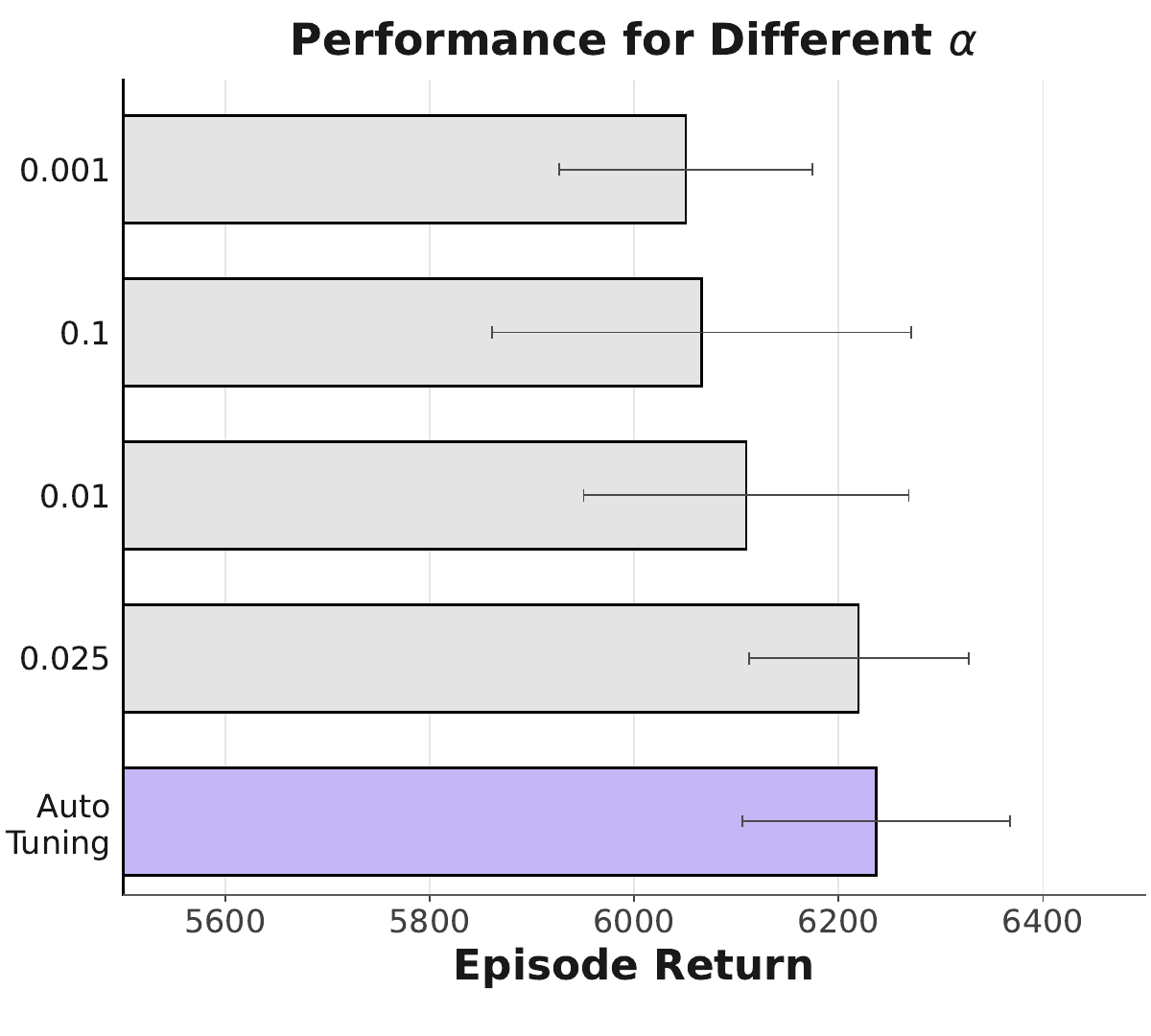}
        \caption{Coefficient term $\alpha_i$}
        \label{fig:alpha}
    \end{subfigure}
    \caption{Ablation studies of OMAF on the Ant $2\times4$ task.}
    \label{fig:ablations}
\end{figure}
To evaluate the individual contributions of key components in OMAF, we conduct comprehensive ablation studies on the Ant $2 \times 4$ task (Figure~\ref{fig:ablations}) about the value estimation, policy architecture and coefficient term sensitivity. More results are shown in Appendix~\ref{appendix:exp}.   

\textbf{Impact of Sample Count $K$ in Softmax Q (Figure~\ref{fig:k}).} Setting $K=1$ (equivalent to OMAF without Softmax Q) leads to a substantial performance degradation, validating the necessity of softmax target value estimation. Selecting an appropriate sample count $K$ significantly stabilizes learning and boosts overall return. However, an excessively large $K$ introduces redundant action sampling, which slightly reduces training efficiency.

\textbf{Policy Architecture and One-Step Efficiency Figure~\ref{fig:arch}).} OMAF utilizing a Transformer backbone consistently outperforms variants based on GRU and MLP, demonstrating that architectural choice plays a critical role in final performance. Furthermore, when reducing the denoising steps of OMAD (denoise step 1 or 2), its performance collapses significantly. This highlights OMAF's superior capability to achieve highly efficient, one-step training without compromising policy quality.   

\textbf{Temperature Coefficient $\alpha$ Auto-Tuning (Figure~\ref{fig:alpha})}: Dynamically auto-tuning $\alpha$ yields higher returns than fixed coefficient settings (e.g., $0.001$, $0.01$, $0.025$, and $0.1$). This confirms that the auto-tuning scheme serves as an effective mechanism to dynamically govern policy exploration.

\section{Conclusion}
We present OMAF, an efficient one-step flow policy framework for online multi-agent reinforcement learning. By integrating expressive flow-based policy modeling with synchronized joint policy optimization and softmax Q-value estimation, OMAF enables efficient action generation while effectively capturing multi-agent coordination. Experiments on MPE and MAMuJoCo demonstrate that OMAF achieves superior performance against strong online diffusion-based and flow-based baselines, while requiring only a single flow step for action generation. These results highlight the potential of one-step flow policies as an efficient and expressive approach for online MARL.
\clearpage
\FloatBarrier

\begingroup
\small
\urlstyle{same}
\bibliographystyle{dilab_ref}
\bibliography{references}
\endgroup

\makeappendixtoc
\appendix
\section{Detailed Related Works}\label{appendix:related_works}
Here, we review related works on online MARL and diffusion- and flow-based policy optimization, covering both single- and multi-agent RL settings. We further discuss the challenges of one-step flow policy optimization in online MARL and compare OMAF with existing approaches.

\subsection{Online Multi-Agent Reinforcement Learning}
Online multi-agent reinforcement learning (MARL) has achieved substantial progress in recent years. Under the centralized training with decentralized execution (CTDE) paradigm~\citep{oliehoek2008optimal,matignon2012coordinated}, a series of influential algorithms, including MADDPG~\citep{lowe2017multi}, MAPPO~\citep{yu2021surprising}, VDN~\citep{sunehag2017value}, COMA~\citep{foerster2018counterfactual}, QTRAN~\citep{son2019qtran}, QMIX~\citep{rashid2018qmix}, and QPLEX~\citep{wangqplex}, have significantly improved coordination and stability in cooperative tasks. In parallel, fully decentralized approaches such as IQL~\citep{tampuu2017multiagent}, IPPO~\citep{de2020independent}, and MATD3~\citep{ackermann2019reducing} achieve competitive performance without centralized critics, underscoring the scalability of online MARL; recent studies further tackle non-stationarity and co-adaptation in fully decentralized settings via return-aware context modeling~\citep{li2025rac} and learning-aware policy gradients~\citep{meulemans2025learning}.

Building on these foundations, recent centralized methods further improve online MARL from several perspectives, including value factorization, exploration, robustness, and scalability. Value decomposition and the IGM principle have been enhanced through more expressive factorization~\citep{huang2026pow,liu2025qcofr,qu2026drigm}, while centralized policy optimization and exploration are improved through joint-policy alignment and adaptive context modeling~\citep{li2026magpo,duan2025adaptive}. Other works study exploration, robustness, and scalability through regularization, curriculum learning, communication constraints, and adversarial training~\citep{pan2021regularized,li2023race,lirevisiting,jungagent,chen2025novelty,kontogiannisenhancing,zhao2025curriculum,zhao2024optimistic,yangmulti,lee2026ibal,jiang2026bhea}. Heterogeneity-aware methods, including HARL and its variants~\citep{zhong2024heterogeneous,liumaximum} and HQP~\citep{anonymous2025heterogeneous}, further extend coordination to heterogeneous agents.

Despite these advances, most online MARL methods use unimodal policies, limiting their ability to represent multimodal coordination strategies. Recent works introduce diffusion policies into online MARL~\citep{li2026omad,balim2026ddpl,anonymous2025heterogeneous}, but iterative denoising incurs substantial sampling and training overhead. Our approach instead employs one-step flow policies, retaining the expressiveness of generative policies while enabling efficient online MARL.

\subsection{Flow Policy in Offline Reinforcement Learning}

Flow-based policies provide an efficient module to achieve iterative denoising by learning velocity fields that enable direct or few-step action generation~\citep{lipmanflow,liuflow,albergobuilding,lipman2024flow,li2026a}. In offline RL, representative approaches such as FQL~\citep{park2025fql}, FlowQ~\citep{alles2025flowqenergyguidedflowpolicies}, QIPO~\citep{zhangenergy}, SSCP~\citep{koirala2025flowbasedsinglestepcompletionefficient}, GFP~\citep{nguimatsia2026gfp}, FLP~\citep{xia2026flp} and OFQL~\citep{nguyen2026ofql} explore flow-based policy learning through one-step generation, energy-guided training, or direct velocity-field learning. Subsequent works further improve flow policies through value-aware generation, behavior regularization, and efficient distillation~\citep{nguimatsia2026gfp,zhang2026goldenstart,xia2026flp,duan2025racd,li2025qchunking}.

These developments naturally extend to multi-agent settings, where expressive generative policies can model diverse joint behaviors and coordination patterns. In offline MARL, early flow-based MARL methods include MAC-Flow~\citep{lee2026macflow} and OM$^2$P~\citep{li2026om2p}, both of which operate offline and rely on multi-stage training or distillation. To the best of our knowledge, no prior work directly learns flow policies within online MARL, which is the setting addressed by our work.

\subsection{Flow Policy in Online RL}

Flow policies offer an expressive and efficient parameterization for reinforcement learning~\citep{nahim2026a}, with representative methods including ReinFlow~\citep{zhang2025reinflow}, SAC-Flow~\citep{zhang2026sac}, GS-Flow~\citep{zhang2026goldenstart}, and MVP~\citep{zhan2026mean}. Recent studies further enable direct policy-gradient optimization of flow policies, including FPO~\citep{mcallister2026flow}, OSMFPG~\citep{wang2026one}, LaMPO~\citep{liu2026latent}, FlowRL~\citep{lyu2026flow}, RFO~\citep{zhong2026reparameterization}, MFPO~\citep{dong2026mean}, RFM~\citep{li2026a}, and FPMD~\citep{chen2025one}. Despite these advances, existing flow policy optimization methods are predominantly developed for single-agent RL and do not address the challenges of one-step flow optimization in online MARL. Our work extends flow-based policy optimization to this setting by enabling direct one-step flow policy optimization under CTDE.

\section{Proofs of Theoretical Results}
\label{appendix:proofs}

\begin{proof}[Proof of Theorem~\ref{thm:soft_policy_convergence}]
Write $F(\boldsymbol\theta)\coloneq\mathcal L_{\pi}(\boldsymbol\theta)$ and
$\overline F(\boldsymbol\pi)\coloneq\overline{\mathcal L}_{\pi}(\boldsymbol\pi)$.

\textbf{Step 1: variational representation.}
For a fixed joint observation, the soft-optimal policy obeys
\begin{equation}
\boldsymbol\pi^\star(\boldsymbol a\mid\boldsymbol s)
=\frac{\exp(Q^\star(s,\boldsymbol a)/\alpha)}{Z(\boldsymbol s)},
\qquad
Z(\boldsymbol s)=\int\exp(Q^\star(s,\boldsymbol a)/\alpha)
\,\mathrm d\boldsymbol a.
\end{equation}
Substitution of
$\alpha\log\boldsymbol\pi^\star=Q^\star-\alpha\log Z$ into the ideal actor
objective gives the exact variational identity
\begin{equation}
\overline F(\boldsymbol\pi)-\overline F(\boldsymbol\pi^\star)
=\alpha\,\mathbb E_{\boldsymbol s\sim\rho}
\!\left[
\operatorname{KL}\!\left(
\boldsymbol\pi(\cdot\mid\boldsymbol s)
\,\Vert\,\boldsymbol\pi^\star(\cdot\mid\boldsymbol s)
\right)\right].
\label{eq:ideal_soft_policy_identity}
\end{equation}
The normalizing term $-\alpha\mathbb E_{\rho}[\log Z(\boldsymbol s)]$ is
independent of the candidate policy and cancels in this difference.  Thus the
ideal actor objective is an information projection onto the soft-optimal
policy within the decentralized factorized policy class.

\textbf{Step 2: objective perturbation.}
Let $\widehat Q$ and $\widehat\ell_{\boldsymbol\theta}$ denote the critic and
joint log likelihood used by the implementation.  Before taking absolute values,
their cumulative deviation from the ideal objective is
\begin{equation}
F(\boldsymbol\theta)-\overline F(\boldsymbol\pi_{\boldsymbol\theta})
=\mathbb E_{\rho,\boldsymbol\pi_{\boldsymbol\theta}}\!\left[
\alpha\bigl(\widehat\ell_{\boldsymbol\theta}
-\log\boldsymbol\pi_{\boldsymbol\theta}\bigr)
-\bigl(\widehat Q-Q^\star\bigr)\right].
\label{eq:actor_objective_perturbation}
\end{equation}
Hence uniform expected errors $\varepsilon_\ell$ and $\varepsilon_Q$ imply~\eqref{eq:objective_approximation_error} by the triangle inequality.
In particular, the theorem does not identify the practical log likelihood with an
exact likelihood; its discrepancy is carried explicitly by
$\varepsilon_\ell$.

\textbf{Step 3: transfer from objective error to policy error.}
Let $\boldsymbol\theta_{\mathrm{app}}$ be an arbitrarily accurate minimizer
of the left-hand side of~\eqref{eq:flow_policy_approximation_error}.
Applying~\eqref{eq:objective_approximation_error} at
$\boldsymbol\theta_k$ and $\boldsymbol\theta_{\mathrm{app}}$ yields the
following four-step transfer:
\begin{equation}
\begin{aligned}
\overline F(\boldsymbol\pi_{\boldsymbol\theta_k})
&\leq F(\boldsymbol\theta_k)+\varepsilon_{\mathrm{obj}} \nonumber\\
&=\inf_{\boldsymbol\theta}F(\boldsymbol\theta)
+\varepsilon_{\mathrm{opt},k}+\varepsilon_{\mathrm{obj}} \nonumber\\
&\leq F(\boldsymbol\theta_{\mathrm{app}})
+\varepsilon_{\mathrm{opt},k}+\varepsilon_{\mathrm{obj}} \nonumber\\
&\leq\overline F(\boldsymbol\pi_{\boldsymbol\theta_{\mathrm{app}}})
+\varepsilon_{\mathrm{opt},k}+2\varepsilon_{\mathrm{obj}}.
\end{aligned}
\label{eq:objective_error_decomposition}
\end{equation}
Subtracting $\overline F(\boldsymbol\pi^\star)$, then applying the
approximation assumption and~\eqref{eq:ideal_soft_policy_identity},
gives
\begin{equation}
\alpha\,\mathbb E_{\rho}\!\left[
\operatorname{KL}\!\left(
\boldsymbol\pi_{\boldsymbol\theta_k}\Vert\boldsymbol\pi^\star\right)\right]
\leq\varepsilon_{\mathrm{opt},k}
+2\varepsilon_{\mathrm{obj}}+\alpha\varepsilon_{\mathrm{app}}.
\label{eq:kl_error_decomposition}
\end{equation}
Dividing by $\alpha$ proves~\eqref{eq:soft_policy_kl_bound}.  In the
realizable and exact-objective limit, the KL divergence vanishes only if the
two policies agree $\rho$-almost surely, proving uniqueness.

\textbf{Step 4: stochastic optimization recursion.}
Let $g_k$ be the mini-batch gradient, satisfying
$\mathbb E[g_k\mid\boldsymbol\theta_k]=\nabla F(\boldsymbol\theta_k)$ and
$\mathbb E[\|g_k-\nabla F(\boldsymbol\theta_k)\|_2^2
\mid\boldsymbol\theta_k]\leq\sigma_g^2$.  From $L$-smoothness and the update
$\boldsymbol\theta_{k+1}=\boldsymbol\theta_k-\eta g_k$,
\begin{align}
\mathbb E[F(\boldsymbol\theta_{k+1})\mid\boldsymbol\theta_k]
&\leq F(\boldsymbol\theta_k)-\eta\|\nabla F(\boldsymbol\theta_k)\|_2^2
+\frac{L\eta^2}{2}
\mathbb E[\|g_k\|_2^2\mid\boldsymbol\theta_k]\nonumber\\
&\leq F(\boldsymbol\theta_k)
-\eta\left(1-\frac{L\eta}{2}\right)
\|\nabla F(\boldsymbol\theta_k)\|_2^2
+\frac{L\eta^2\sigma_g^2}{2}.
\label{eq:stochastic_smoothness_descent}
\end{align}
Because $\eta\leq1/L$, the coefficient of the gradient norm is at least
$\eta/2$.  The PL inequality
$\|\nabla F(\boldsymbol\theta_k)\|_2^2\geq
2\mu\varepsilon_{\mathrm{opt},k}$ consequently gives
\begin{equation}
\mathbb E[\varepsilon_{\mathrm{opt},k+1}\mid\boldsymbol\theta_k]
\leq(1-\eta\mu)\varepsilon_{\mathrm{opt},k}
+\frac{L\eta^2\sigma_g^2}{2}.
\label{eq:conditional_pl_recursion}
\end{equation}
Taking total expectations and unrolling the affine recursion,
\begin{align}
\mathbb E[\varepsilon_{\mathrm{opt},k}]
&\leq(1-\eta\mu)^k\varepsilon_{\mathrm{opt},0}
+\frac{L\eta^2\sigma_g^2}{2}
\sum_{j=0}^{k-1}(1-\eta\mu)^j \nonumber\\
&\leq(1-\eta\mu)^k\varepsilon_{\mathrm{opt},0}
+\frac{L\eta\sigma_g^2}{2\mu},
\end{align}
which establishes~\eqref{eq:sgd_optimization_error}.
\end{proof}

\begin{proof}[Proof of Theorem~\ref{thm:one_step_optimality}] We execute the proof following these steps.

\textbf{Step 1: endpoint error of a one-step discretization.}
Let $\widehat a_1$ denote the latent state before action squashing in~\eqref{eq:one_step_stochastic_action}.  Along the optimal
characteristic $\dot a_1^\star=v^\star(a_t^\star,t\mid\boldsymbol s)$,
\begin{equation}
a_1^\star=z+\int_0^1v^\star(a_t^\star,t\mid\boldsymbol s)\,\mathrm dt.
\end{equation}
Adding and subtracting $v^\star(z,0\mid\boldsymbol s)$ therefore gives
\begin{align}
\widehat a_1-a_1^\star
={}&\mu_{\boldsymbol\theta}(z,\boldsymbol s,0)
-v^\star(z,0\mid\boldsymbol s)
+\sigma_{\boldsymbol\theta}(z,\boldsymbol s,0)\odot\boldsymbol\xi \nonumber\\
&-\int_0^1\!\left[
v^\star(a_t^\star,t\mid\boldsymbol s)
-v^\star(z,0\mid\boldsymbol s)\right]\mathrm dt.
\label{eq:one_step_latent_error}
\end{align}
The fundamental theorem of calculus along this characteristic gives the
non-autonomous flow identity
\begin{equation}
v^\star(a_t^\star,t\mid\boldsymbol s)-v^\star(z,0\mid\boldsymbol s)
=\int_0^tD_sv^\star(a_s^\star,s\mid\boldsymbol s)\,\mathrm ds.
\label{eq:characteristic_velocity_identity}
\end{equation}
Thus, the third term in~\eqref{eq:one_step_latent_error} is exactly the
local truncation error caused by replacing the full characteristic integral
with its initial velocity. Applying Minkowski's integral inequality twice yields
\begin{align}
\left(\mathbb E_{\boldsymbol s,z,\boldsymbol\xi}
\left[\|\widehat a_1-a_1^\star\|_2^2\right]\right)^{1/2}
&\leq\varepsilon_\mu+\varepsilon_\sigma
+\int_0^1\!\int_0^t\kappa\,\mathrm ds\,\mathrm dt\nonumber\\
&=\varepsilon_\mu+\varepsilon_\sigma+\frac{\kappa}{2}
=\Delta_1.
\label{eq:latent_endpoint_error}
\end{align}
\textbf{Step 2: latent error to conditional policy discrepancy.}
For every fixed $\boldsymbol s$, couple
$\boldsymbol\pi^{(1)}_{\boldsymbol\theta}(\cdot\mid\boldsymbol s)$ and
$\boldsymbol\pi^\star(\cdot\mid\boldsymbol s)$ with the same
$(z,\boldsymbol\xi)$, where the optimal endpoint simply ignores
$\boldsymbol\xi$.  Since $\tanh$ is $1$-Lipschitz, this is an admissible
coupling in the definition of $W_2$, and hence
\begin{align}
W_2^2\!\left(
\boldsymbol\pi^{(1)}_{\boldsymbol\theta}(\cdot\mid\boldsymbol s),
\boldsymbol\pi^\star(\cdot\mid\boldsymbol s)\right)
&\leq\mathbb E_{z,\boldsymbol\xi\mid\boldsymbol s}
\!\left[\|\tanh(\widehat a_1)-\tanh(a_1^\star)\|_2^2\right]\nonumber\\
&\leq\mathbb E_{z,\boldsymbol\xi\mid\boldsymbol s}
\!\left[\|\widehat a_1-a_1^\star\|_2^2\right].
\label{eq:conditional_wasserstein_coupling}
\end{align}
Integrating~\eqref{eq:conditional_wasserstein_coupling} with respect to
$\rho$ and invoking~\eqref{eq:latent_endpoint_error} proves~\eqref{eq:one_step_wasserstein_bound}.

\textbf{Step 3: conditional policy discrepancy to return loss.}
The performance-difference identity, written using the normalized discounted
occupancy $d^{(1)}_{\boldsymbol\theta}$, is
\begin{align}
J(\boldsymbol\pi^\star)-J(\boldsymbol\pi^{(1)}_{\boldsymbol\theta})
=\frac{1}{1-\gamma}\,
\mathbb E_{\boldsymbol s\sim d^{(1)}_{\boldsymbol\theta}}
\!\left[
\mathbb E_{\boldsymbol a\sim\boldsymbol\pi^\star}
Q^\star(s,\boldsymbol a)
-\mathbb E_{\boldsymbol a\sim\boldsymbol\pi^{(1)}_{\boldsymbol\theta}}
Q^\star(s,\boldsymbol a)\right].
\label{eq:policy_difference_identity}
\end{align}
Kantorovich--Rubinstein duality and the $K_Q$-Lipschitz property imply that
the inner difference in~\eqref{eq:policy_difference_identity} is at most
$K_QW_1\leq K_QW_2$.  The occupancy-ratio assumption and Cauchy--Schwarz give
\begin{align}
\mathbb E_{d^{(1)}_{\boldsymbol\theta}}[W_2]
&=\mathbb E_{\rho}\!\left[
\frac{\mathrm d d^{(1)}_{\boldsymbol\theta}}{\mathrm d\rho}W_2\right]
\leq\sqrt{C_\rho}\,
\left(\mathbb E_{\rho}[W_2^2]\right)^{1/2}
\leq\sqrt{C_\rho}\Delta_1.
\label{eq:occupancy_transfer_bound}
\end{align}
Substitution into~\eqref{eq:policy_difference_identity} proves~\eqref{eq:one_step_performance_bound}.  If all three error terms vanish,
the coupling agrees almost surely at the endpoint, establishing exact
one-step recovery.
\end{proof}

\section{Details of the Flow Policy}\label{appendix:flow_policy}

In this section, we provide both the theoretical formulation and the practical
implementation of our flow policy. We first show that flow-based policies
admit path score surrogate evaluation through the continuous-time
change-of-variables formulation. We next derive the corresponding
expression and its computation along the flow trajectory. Finally, we detail
the flow policy used in our experiments, including the observation-conditioned
Transformer architecture for parameterizing the velocity field and the
path-wise likelihood surrogate used in practice to provide the log-policy term
for policy optimization.

\subsection{Derivation of path score surrogate}
We employ a factorized policy
$\bm{\pi}_{\theta}(a|s)=\prod_{i=1}^N\pi_{\theta_i}(a^i|s)$,
where each agent $i$ generates its action independently conditioned on the global state $s$. Specifically, we parameterize the policy of agent $i$ with a Transformer-based velocity field $v_{\theta_i}(a_\tau^i,s,\tau)$ and generate actions by evolving an initial Gaussian noise $a_1^i\sim\mathcal{N}(0,I)$ along the reverse-time trajectory from $\tau=1$ to $\tau=0$, and finally $a^i=\mathrm{tanh}(a_0^i)$. To introduce stochasticity into the generation process, we formulate the flow policy as a reverse-time SDE that preserves the desired marginal distribution~\citep{zhang2026sac,liu2025flowgrpo}. For each agent $i$, the reverse-time dynamics are given by
\begin{equation}
\mathrm{d}a_\tau^i =
\left[
v_{\theta_i}(a_\tau^i,s,\tau)
-\frac{\sigma_\tau^2}{2}
\nabla_{a_\tau^i}\log p_\tau(a_\tau^i|s)
\right]\mathrm{d}\tau
+\sigma_\tau\mathrm{d}w,
\end{equation}
where $w$ denotes a Wiener process and $\sigma_\tau$ controls the level of stochasticity during generation. For the rectified flow formulation, the score term can be expressed using the velocity field, yielding
\begin{equation}
\mathrm{d}a_\tau^i =
\left[
v_{\theta_i}(a_\tau^i,s,\tau)
+
\frac{\sigma_\tau^2}{2\tau}
\left(
a_\tau^i+(1-\tau)v_{\theta_i}(a_\tau^i,s,\tau)
\right)
\right]\mathrm{d}\tau
+\sigma_\tau\mathrm{d}w.
\end{equation}
Applying the Euler-Maruyama discretization gives the stochastic action update
\begin{equation}
\label{eq:sde_flow_policy}
a_{\tau-\Delta\tau}^i=
a_\tau^i
-
\left[
v_{\theta_i}(a_\tau^i,s,\tau)
+
\frac{\sigma_\tau^2}{2\tau}
\left(
a_\tau^i+(1-\tau)v_{\theta_i}(a_\tau^i,s,\tau)
\right)
\right]\Delta\tau
+
\sigma_\tau\sqrt{\Delta\tau}\epsilon,
\end{equation}
where $\epsilon\sim\mathcal{N}(0,I)$ provides stochasticity during action generation, and $\Delta\tau>0$ denotes the step size, such that the reverse-time process evolves from $\tau=1$ to $\tau=0$ with $\tau\leftarrow\tau-\Delta\tau$ at each step. Conditioned on $a_\tau^i$ and $s$, the one-step denoised action $a_{\tau-\Delta\tau}^i$ follows a Gaussian distribution:
\begin{equation}
p_{\theta_i}\left(a_{\tau-\Delta\tau}^i\mid a_\tau^i,s\right)
\sim
\mathcal{N}\left(
a_{\tau-\Delta\tau}^i;
\mu_{\theta_i}(a_\tau^i,s,\tau),
\sigma_\tau^2\Delta\tau I
\right),
\end{equation}
where $\mu_{\theta_i}(a_\tau^i,s,\tau)$ denotes the deterministic mean specified by the drift term in~\ref{eq:sde_flow_policy}. Therefore, the one-step transition likelihood admits a closed-form expression, making it tractable to evaluate and optimize.

\subsection{Detailed derivation of the path score surrogate}
Let $\tilde{a}^i=(a_{\tau_0}^i,\ldots,a_{\tau_H}^i)$ denote the reverse-time trajectory of agent $i$, where $\tau_0=1>\tau_1>\cdots>\tau_H=0$ and $a_{\tau_0}^i\sim\mathcal{N}(0,I)$. The final action is obtained by applying an element-wise $\tanh$ transformation,
$a^i=\tanh(a_{\tau_H}^i)=\tanh(a_0^i)$.
Based on the factorized policy
$\bm{\pi}_{\theta}(a|s)=\prod_{i=1}^N\pi_{\theta_i}(a^i|s)$, the joint path density of all agents $\tilde{a}=(\tilde{a}^1,\ldots,\tilde{a}^N)\sim p_{\theta}(\tilde{a}|s)=\prod_{i=1}^N\pi_{\theta_i}(\tilde{a}^i|s)$ is written as:
\begin{equation}
\label{eq:joint_path_density}
p_{\theta}(\tilde{a}|s)
=
\prod_{i=1}^N
\left[
\zeta(a_{\tau_0}^i)
\prod_{h=0}^{H-1}
p_{\theta_i}
\left(
a_{\tau_{h+1}}^i
\mid
a_{\tau_h}^i,s;\Delta\tau_h
\right)
\Vert \mathrm{det}(\mathcal{J}(\mathrm{tanh}(a_{\tau_H}^i))) \Vert^{-1}
\right],
\end{equation}
where $\zeta(\cdot)$ denotes the standard Gaussian base density, and $p_{\theta_i}(\cdot\mid a_{\tau_h}^i,s;\Delta\tau_h)$ is the Gaussian one-step transition induced by the Euler-Maruyama discretization. For a $D$-dimensional vector $a$, since the $\tanh$ transformation is applied independently to each action dimension, its Jacobian determinant admits the closed-form expression
$
\left\Vert\det J(\tanh(a))\right\Vert
=
\prod_{d=1}^{D}
\left(1-(a_d)^2\right)^{-1}.
$
The marginal policy density of $a=(a^1,\ldots,a^N)$ is written as integrating the intermediate pre-activations:
\begin{equation}
\ell_{\theta}(a|s)
=
\int
\cdots
\int
\ell_{\theta}(\tilde{a}|s)
\mathrm{d}a_{\tau_0}^1
\cdots
\mathrm{d}a_{\tau_{H-1}}^1
\cdots
\mathrm{d}a_{\tau_0}^N
\cdots
\mathrm{d}a_{\tau_{H-1}}^N,
\qquad
a^i=\tanh(a_0^i).
\end{equation}
We denote ${\ell}^i\approx\pi_{\theta_i}(a^i|s)$ as the path score surrogate of the agent $i$ and it can be written as   
\begin{equation}
{\ell}^i=
\int
\cdots
\int
p_{\theta_i}(\tilde{a}^i|s)
\mathrm{d}a_{\tau_0}^i
\cdots
\mathrm{d}a_{\tau_{H-1}}^i.
\end{equation}
The path score surrogate enables efficient value estimation and policy optimization compared with the diffusion-based policies, e.g., OMAD~\citep{li2026omad}.

We give a theorem to guarantee that the use of the path score surrogate is reasonable for efficient exploration for value estimation and synchronized policy optimization.

\begin{theorem}[Bounded discrepancy of the path score surrogate] Given that $z^i=(a_{\tau_0}^i,\ldots,a_{\tau_{H-1}}^i)$, assume that the conditional differential entropy of the latent flow path given the final action and state is finite and bounded by
\begin{equation}
\left|
\mathcal{H}_{\theta_i}(Z^i\mid A^i,s)
\right|
\le \varepsilon_i.
\end{equation}
Then the expected discrepancy between the exact action log-likelihood and the path score surrogate satisfies
\begin{equation}
\left|
\mathbb{E}_{p_{\theta_i}(a^i,z^i\mid s)}
\left[
\log \pi_{\theta_i}(a^i\mid s)
-
\ell_{\theta_i}(a^i,z^i\mid s)
\right]
\right|
\le \varepsilon_i.  
\end{equation}
In particular, if the latent path is conditionally concentrated given the final action, i.e.,
$\mathcal{H}_{\theta_i}(Z^i\mid A^i,s)\le\varepsilon_i$
with small $\varepsilon_i$, then the path score provides a close approximation to the exact action log-likelihood in expectation.

For the factorized multi-agent policy
$\bm{\pi}_{\theta}(a\mid s)=\prod_{i=1}^N\pi_{\theta_i}(a^i\mid s)$,
the discrepancy satisfies
\begin{equation}
\left|
\mathbb{E}
\left[
\log \bm{\pi}_{\theta}(a\mid s)
-
\sum_{i=1}^N
\ell_{\theta_i}(a^i,z^i\mid s)
\right]
\right|
\le
\sum_{i=1}^N \varepsilon_i.
\end{equation}

\end{theorem}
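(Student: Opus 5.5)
The plan is to read $\ell_{\theta_i}(a^i,z^i\mid s)$ as the log of the joint path density of (final action, latent path) produced by the Euler--Maruyama chain in \eqref{eq:joint_path_density}. That is, $\ell_{\theta_i}(a^i,z^i\mid s)=\log p_{\theta_i}(a^i,z^i\mid s)$, with the $\tanh$ Jacobian included, so the density is taken with respect to Lebesgue measure on $(a^i,z^i)$. The whole argument then rests on one identity, the chain rule of densities. Since $p_{\theta_i}(a^i,z^i\mid s)=\pi_{\theta_i}(a^i\mid s)\,p_{\theta_i}(z^i\mid a^i,s)$, whenever the conditional density exists we have the pointwise relation
\begin{equation*}
\log\pi_{\theta_i}(a^i\mid s)-\ell_{\theta_i}(a^i,z^i\mid s)=-\log p_{\theta_i}(z^i\mid a^i,s).
\end{equation*}

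The first step is to justify that the conditional density is well defined. Every transition in the chain is a nondegenerate Gaussian with covariance $\sigma_\tau^2\Delta\tau I$, the base $\zeta$ is Gaussian, and $\tanh$ is a diffeomorphism onto its image. So the joint law of $(a_{\tau_0},\dots,a_{\tau_H})$ has a strictly positive density, and disintegrating over $a^i$ is legitimate. This step also fixes the normalization of $\ell$ consistently with the Jacobian factor. I expect it to be the main technical care point, because the paper's notation conflates $\ell^i$ with the marginal integral. I will state explicitly that the surrogate meant here is the unintegrated path log-density evaluated on the sampled trajectory.

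The second step is to take the expectation under $p_{\theta_i}(a^i,z^i\mid s)$. By the definition of conditional differential entropy, $\mathbb E[-\log p_{\theta_i}(Z^i\mid A^i,s)]=\mathcal H_{\theta_i}(Z^i\mid A^i,s)$. The hypothesis $|\mathcal H_{\theta_i}|\le\varepsilon_i$ then gives the single-agent bound directly, and finiteness of the entropy makes the expectation well defined. The remark about conditional concentration follows immediately. Note that the entropy can be negative for concentrated paths, which is why the hypothesis is stated with absolute values.

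The third step handles the factorized policy. Agents share $s$ but draw independent noises, so the joint path density factorizes as in \eqref{eq:joint_path_density}. Hence $\log\boldsymbol\pi_\theta(a\mid s)-\sum_i\ell_{\theta_i}=\sum_i\bigl(\log\pi_{\theta_i}(a^i\mid s)-\ell_{\theta_i}(a^i,z^i\mid s)\bigr)$ pointwise. Taking expectations, each summand's expectation depends only on agent $i$'s marginal, so the single-agent result applies to each term. The triangle inequality then yields $\sum_i\varepsilon_i$.
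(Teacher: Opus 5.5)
Your proposal is correct and takes essentially the same route as the paper. The chain rule $p_{\theta_i}(a^i,z^i\mid s)=\pi_{\theta_i}(a^i\mid s)\,p_{\theta_i}(z^i\mid a^i,s)$ gives the pointwise identity $\log\pi_{\theta_i}(a^i\mid s)-\ell_{\theta_i}(a^i,z^i\mid s)=-\log p_{\theta_i}(z^i\mid a^i,s)$, whose expectation is $\mathcal{H}_{\theta_i}(Z^i\mid A^i,s)$, and the factorized case follows by summing over agents and applying the triangle inequality. Your extra remarks on why the conditional density exists, and on reading $\ell_{\theta_i}(a^i,z^i\mid s)$ as the unintegrated joint path log-density rather than the marginal integral suggested by the paper's earlier notation, spell out what the paper's proof uses implicitly.
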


\begin{proof}
For any $(a^i,z^i)$ with positive density, Bayes' rule gives
\begin{equation}
p_{\theta_i}(z^i\mid a^i,s)
=
\frac{
p_{\theta_i}(a^i,z^i\mid s)
}{
\pi_{\theta_i}(a^i\mid s)
}.
\end{equation}
Taking logarithms,
\begin{equation}
\ell_{\theta_i}(a^i,z^i\mid s)
=
\log \pi_{\theta_i}(a^i\mid s)
+
\log p_{\theta_i}(z^i\mid a^i,s).
\end{equation}

Therefore,
\begin{equation}
\log \pi_{\theta_i}(a^i\mid s)
-
\ell_{\theta_i}(a^i,z^i\mid s)
=
-\log p_{\theta_i}(z^i\mid a^i,s).
\end{equation}

Taking expectation with respect to
$p_{\theta_i}(a^i,z^i\mid s)$ yields
\begin{equation}
\mathbb{E}
\left[
\log \pi_{\theta_i}(a^i\mid s)
-
\ell_{\theta_i}(a^i,z^i\mid s)
\right]
=
-\mathbb{E}
\left[
\log p_{\theta_i}(z^i\mid a^i,s)
\right]
=
\mathcal{H}_{\theta_i}(Z^i\mid A^i,s).   
\end{equation}

Hence,
\begin{equation}
\left|
\mathbb{E}
\left[
\log \pi_{\theta_i}(a^i\mid s)
-
\ell_{\theta_i}(a^i,z^i\mid s)
\right]
\right|
=
\left|
\mathcal{H}_{\theta_i}(Z^i\mid A^i,s)
\right|
\le \varepsilon_i.  
\end{equation}

For the factorized policy,
\begin{equation}
\log \bm{\pi}_{\theta}(a\mid s)
=
\sum_{i=1}^N
\log \pi_{\theta_i}(a^i\mid s),
\end{equation}

and thus
\begin{equation}
\mathbb{E}
\left[
\log \bm{\pi}_{\theta}(a\mid s)
-
\sum_{i=1}^N
\ell_{\theta_i}(a^i,z^i\mid s)
\right]
=
\sum_{i=1}^N
\mathcal{H}_{\theta_i}(Z^i\mid A^i,s),
\end{equation}

which gives
\begin{equation}
\left|
\mathbb{E}
\left[
\log \bm{\pi}_{\theta}(a\mid s)
-
\sum_{i=1}^N
\ell_{\theta_i}(a^i,z^i\mid s)
\right]
\right|
\le
\sum_{i=1}^N\varepsilon_i.   
\end{equation}

This completes the proof.
\end{proof}

\begin{corollary}[Validity for entropy-regularized policy optimization]
Consider an entropy-regularized objective with temperature $\alpha_i$. Replacing the intractable term
$\log\pi_{\theta_i}(a^i\mid s)$ by the path score surrogate
$\ell_{\theta_i}(a^i,z^i\mid s)$ incurs at most an expected objective discrepancy of
$\alpha_i\varepsilon_i$ for agent $i$. Hence, when the conditional path entropy is small, path-score optimization provides a controlled approximation to maximum-entropy policy optimization.
\end{corollary}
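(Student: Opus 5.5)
The corollary should follow directly from the preceding theorem on the bounded discrepancy of the path score surrogate. The plan is to write the per-agent entropy-regularized objective in its exact form and in its surrogate form, subtract the two, and recognise the difference as $\alpha_i$ times the quantity the theorem already controls. Concretely, for agent $i$ and a fixed state $s$, the exact objective is
\begin{equation*}
J_i(\theta_i)=\mathbb{E}_{p_{\theta_i}(a^i,z^i\mid s)}\bigl[\alpha_i\log\pi_{\theta_i}(a^i\mid s)-Q(s,a)\bigr].
\end{equation*}
The surrogate objective $\widehat J_i(\theta_i)$ is the same expression with $\log\pi_{\theta_i}(a^i\mid s)$ replaced by $\ell_{\theta_i}(a^i,z^i\mid s)$. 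Here I take the expectation under the joint law of the action and its latent path, since the latent path is what is sampled in practice.

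Since the $Q$-term is identical in both objectives, it cancels in the difference. By linearity of expectation,
\begin{equation*}
J_i(\theta_i)-\widehat J_i(\theta_i)=\alpha_i\,\mathbb{E}\bigl[\log\pi_{\theta_i}(a^i\mid s)-\ell_{\theta_i}(a^i,z^i\mid s)\bigr].
\end{equation*}
The proof of the preceding theorem identifies the expectation on the right exactly as $\mathcal{H}_{\theta_i}(Z^i\mid A^i,s)$. Taking absolute values, using $\alpha_i\ge 0$ (the temperature is nonnegative), and invoking the assumed bound $|\mathcal{H}_{\theta_i}(Z^i\mid A^i,s)|\le\varepsilon_i$ gives $|J_i-\widehat J_i|\le\alpha_i\varepsilon_i$, which is the claim for agent $i$. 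If $s$ is drawn from $\rho$, I would assume the entropy bound holds uniformly in $s$, or alternatively average it; either way the bound survives taking the outer expectation over $s$.

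For the joint synchronized objective in~\eqref{eq:flow_actor_loss}, summing the per-agent differences gives a total discrepancy of at most $\sum_i\alpha_i\varepsilon_i$ by the triangle inequality. This quantity plays the role of $\alpha\varepsilon_\ell$ in Theorem~\ref{thm:soft_policy_convergence}, and that is what makes the approximation "controlled": feeding it into Theorem~\ref{thm:soft_policy_convergence} shows that the KL bound on the learned policy degrades only by a term proportional to $\sum_i\alpha_i\varepsilon_i$, and hence vanishes as the conditional path entropy goes to zero.

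The only delicate point is the sign and measure-theoretic convention. A conditional differential entropy can be negative, which is why the assumption is stated on $|\mathcal{H}|$. The identity from the theorem's proof also requires $\ell_{\theta_i}$ to be read as the joint log-density $\log p_{\theta_i}(a^i,z^i\mid s)$, and I would state that interpretation explicitly. Beyond that there is no real obstacle.
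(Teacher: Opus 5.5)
Your proposal is correct and follows essentially the same route as the paper, which states the corollary without a separate proof as an immediate consequence of the bounded-discrepancy theorem: the $Q$-term cancels, and the remaining difference is $\alpha_i\,\mathbb{E}[\log\pi_{\theta_i}(a^i\mid s)-\ell_{\theta_i}(a^i,z^i\mid s)]$, whose absolute value the theorem bounds by $\varepsilon_i$. Your point that $\ell_{\theta_i}$ must be read as the joint log-density $\log p_{\theta_i}(a^i,z^i\mid s)$, not as the marginal quantity used in the main-text definition, is a worthwhile clarification of a notational inconsistency in the paper.
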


\subsection{Design of the Flow Policy}

\paragraph{Observation-conditioned flow policy.}
For each agent $i$, we encode its local observation into a context
representation and use a Transformer decoder to parameterize the flow velocity.
Specifically, the observation is first normalized and mapped into the model
dimension through a two-layer MLP:
\begin{equation}
c^i
=
E^i_o\left(
\operatorname{BR}(s)
\right),
\end{equation}
where $\operatorname{BR}(\cdot)$ denotes Batch Renormalization and
$E^i_o$ consists of two linear layers with a SiLU activation. The resulting
context vector is reshaped into a singleton sequence and serves as the memory
input to the Transformer decoder.

At flow step $h$, the current latent action $a_h^i\coloneq a_{\tau_h}^i$ is first normalized and
projected into the model dimension. In parallel, the normalized flow time
$\tau_h=1-h/H$ is encoded by a three-layer MLP with SiLU activations. The action and
time embeddings are then added to form the decoder input:
\begin{equation}
z_h^i
=
E_a^i\left(
\operatorname{BR}(a_h^i)
\right)
+
E_\tau^i(\tau_h).
\label{eq:flow_input_embedding}
\end{equation}
Both $z_h^i$ and the observation context $c^i$ contain a single token.

\paragraph{Transformer decoder.}
The input token is processed by $L$ Transformer decoder layers. Each layer
contains a self-attention module, an observation-conditioned cross-attention
module, and a feed-forward network, with residual connections and Batch
Renormalization after each sublayer. For the $l$-th layer, let
$z_h^{i,(l-1)}$ denote the input representation. The computation is
\begin{equation}
\begin{aligned}
u_h^{i,(l)}
&=
z_h^{i,(l-1)}
+
\operatorname{SA}_l\left(
z_h^{i,(l-1)}
\right),\\
\bar{u}_h^{i,(l)}
&=
\operatorname{BR}\left(
u_h^{i,(l)}
\right),\\
q_h^{i,(l)}
&=
\bar{u}_h^{i,(l)}
+
\operatorname{CA}_l\left(
\bar{u}_h^{i,(l)},c^i,c^i
\right),\\
\bar{q}_h^{i,(l)}
&=
\operatorname{BR}
\left(
q_h^{i,(l)}
\right),\\
z_h^{i,(l)}
&=
\bar{q}_h^{i,(l)}
+
\operatorname{FFN}_l
\left(
\bar{q}_h^{i,(l)}
\right),\\
z_h^{i,(l)}
&\leftarrow
\operatorname{BR}
\left(
z_h^{i,(l)}
\right).
\end{aligned}
\label{eq:flow_transformer_layer}
\end{equation}
Here, $\operatorname{SA}_l$ denotes self-attention over the action token,
while $\operatorname{CA}_l$ uses the decoder representation as the query and
the observation context as both key and value. Since both sequences contain a
single token, the self-attention does not perform token-to-token interaction;
instead, the observation-dependent cross-attention injects the local
observation information into the action representation. The feed-forward
network consists of two linear projections with a GELU activation and expands
the hidden dimension by a factor of four.

The multi-head attention modules in each Transformer layer are implemented using scaled dot-product attention. Specifically, for an attention module with $N_h$ heads, the input sequences are first projected into query, key, and value representations. For head $j$, we compute
\begin{equation}
Q_j = X_q W_{Q,j},\qquad
K_j = X_h W_{K,j},\qquad
V_j = X_v W_{V,j},
\end{equation}
and apply scaled dot-product attention:
\begin{equation}
\operatorname{Attn}_j(Q_j,K_j,V_j)
=
\operatorname{softmax}
\left(
\frac{Q_jK_j^\top}{\sqrt{d_h}}
\right)V_j,
\end{equation}
where $d_h$ is the dimension of each attention head. The outputs from all heads are concatenated and mapped back to the model dimension by an output projection.

In our flow policy, this attention operation is used in two different ways. First, the self-attention module applies attention to the action representation itself:
\begin{equation}
\operatorname{SA}_l(z)
=
\operatorname{MHA}_l(z,z,z),
\end{equation}
where the same representation is used as query, key, and value. Second, the observation-conditioned cross-attention module uses the current action representation as the query and the observation context as both key and value:
\begin{equation}
\operatorname{CA}_l(z,c,c)
=
\operatorname{MHA}_l(z,c,c).
\end{equation}
Thus, the cross-attention explicitly conditions the action representation on the local observation.

\paragraph{Velocity and noise prediction.}
After the final Transformer layer, the resulting representation is passed
through two independent linear heads. The first predicts the flow velocity,
\begin{equation}
v_{\theta_i}(a_h^i,o^i,t_h)
=
W_v^i z_h^{i,(L)}+b_v^i,
\end{equation}
while the second predicts the diagonal log standard deviation,
\begin{equation}
r_{\theta_i}(a_h^i,o^i,t_h)
=
W_\sigma^i z_h^{i,(L)}+b_\sigma^i.
\end{equation}
The predicted log standard deviation is bounded through a hyperbolic tangent
and linearly mapped to the prescribed interval $[-5,2]$, after which the noise
scale is obtained as
\begin{equation}
\sigma_{\theta_i}
=
\exp(r_{\theta_i}).
\end{equation}
The resulting velocity is sampled as
\begin{equation}
\hat v_h^i
=
v_{\theta_i}(a_h^i,o^i,t_h)
+
\sigma_{\theta_i}(a_h^i,o^i,t_h)\epsilon_h,
\qquad
\epsilon_h\sim\mathcal N(0,I),
\end{equation}
and the latent action is updated by
\begin{equation}
a_{k+1}^i
=
a_h^i-\hat v_h^i\Delta t,
\qquad
\Delta t=\frac{1}{H}.
\label{eq:flow_rollout}
\end{equation}
Starting from $a_0^i\sim\mathcal N(0,I)$, this iterative flow-matching
procedure produces the final latent action $a_H^i$. The action is then mapped
to the environment action space using an element-wise $\tanh$ transformation
followed by affine scaling:
\begin{equation}
a^i
=
\left[
\tanh(a_H^i)\odot s^i+b^i
\right]\odot m^i,
\end{equation}
where $s^i$ and $b^i$ denote the action scale and bias, respectively, and
$m^i$ is the action-validity mask.

\paragraph{Implemented path score surrogate.}
While the previous formulation defines the policy path score through the
continuous-time change-of-variables integral, we use a path-wise surrogate
during sampling to obtain the log-policy term required by the actor objective.
Specifically, for a $D$-dimensional sampled trajectory
$\{a_h^i,\epsilon_h^i\}_{h=0}^{H-1}$, the sampler accumulates the log-density
contributions from the initial Gaussian latent and the Gaussian perturbations
injected at each flow step. After accounting for the change of variables
induced by the final $\tanh$ transformation, the resulting scalar is
\begin{equation}
\widetilde{\ell}^{~i}
=
\sum_{d=1}^{D}\log\mathcal{N}(a^i_{0,d};0,1)
+\sum_{h=0}^{H-1}\sum_{d=1}^{D}
\log\mathcal{N}(\epsilon^i_{h,d};0,1)
\nonumber-\sum_{d=1}^{D}
\log\left(1-(u^i_d)^2+10^{-6}\right),
\label{eq:path_score_surrogate}
\end{equation}
where $u^i$ denotes the latent variable immediately before the final
$\tanh$ transformation. The first term accounts for the initial Gaussian
latent, the second accumulates the log-density of the Gaussian perturbations
along the flow trajectory, and the last term is the standard
change-of-variables correction associated with the $\tanh$ transformation.

We give a theorem to establish that the gap between the approximation and the real integration is bounded.

\begin{theorem}[Approximation error of the path score]
Let $\ell_{\theta_i}(a^i,\tilde a^i\mid s)$ denote the exact log-density of a sampled flow trajectory, including the Gaussian base density, all Euler--Maruyama transition densities, and the $\tanh$ change-of-variables term. Let $\widetilde{\ell}^{~i}$ be the practical approximation in~\eqref{eq:path_score_surrogate}. Assume that the noise scales satisfy
\begin{equation}
0<\underline{\sigma}
\le
\sigma_{\tau_h}\sqrt{\Delta\tau_h}
\le
\overline{\sigma},
\qquad h=0,\ldots,H-1,
\end{equation}

and that the pre-activation satisfies
\begin{equation}
1-(u_d^i)^2\ge \kappa>0,
\qquad d=1,\ldots,D.
\end{equation}

Then
\begin{equation}
    \ell_{\theta_i}(a^i,\tilde a^i\mid s)
-
\widetilde{\ell}^{~i}
=
-D\sum_{h=0}^{H-1}
\log\!\left(\sigma_{\tau_h}\sqrt{\Delta\tau_h}\right)
+
\sum_{d=1}^{D}
\log
\left(
1+
\frac{10^{-6}}{1-(u_d^i)^2}
\right),
\end{equation}

and hence
\begin{equation}
    \left|
\ell_{\theta_i}(a^i,\tilde a^i\mid s)
-
\widetilde{\ell}^{~i}
\right|
\le
DH\max\{
|\log\underline{\sigma}|,
|\log\overline{\sigma}|
\}
+
D\log\left(1+\frac{10^{-6}}{\kappa}\right).
\end{equation}

In particular, when the noise schedule and discretization are fixed, the first term is a parameter-independent constant. The remaining approximation error is introduced only by the numerical stabilization term $10^{-6}$ and is negligible when $\kappa$ is not too small.
\label{thm:path_score_approx}
\end{theorem}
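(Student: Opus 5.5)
The proof is a term-by-term comparison of two explicit log-densities along the same sampled trajectory. First I will write the exact path log-density from~\eqref{eq:joint_path_density}: it is the sum of (i) the Gaussian base term $\sum_d\log\mathcal N(a^i_{0,d};0,1)$, (ii) the Euler--Maruyama transition terms $\sum_{h}\log\mathcal N\bigl(a^i_{\tau_{h+1}};\mu_{\theta_i}(a^i_{\tau_h},s,\tau_h),\sigma_{\tau_h}^2\Delta\tau_h I\bigr)$, and (iii) the $\tanh$ correction $-\sum_d\log(1-(u^i_d)^2)$. The base term coincides with the first term of~\eqref{eq:path_score_surrogate}, so it cancels in the difference.

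For the transitions, I will substitute the reparametrization $a^i_{\tau_{h+1}}=\mu_{\theta_i}+\sigma_{\tau_h}\sqrt{\Delta\tau_h}\,\epsilon^i_h$. This is the sampling rule the surrogate accumulates over, and it reads $\epsilon_h$ as the standardized perturbation. The change of variables for a $D$-dimensional isotropic Gaussian with scale $c_h=\sigma_{\tau_h}\sqrt{\Delta\tau_h}$ then gives
\[
\log\mathcal N\bigl(a^i_{\tau_{h+1}};\mu_{\theta_i},c_h^2I\bigr)=\sum_{d}\log\mathcal N(\epsilon^i_{h,d};0,1)-D\log c_h .
\]
The quadratic parts cancel exactly against the second surrogate term. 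Summing over $h$ leaves $-D\sum_h\log(\sigma_{\tau_h}\sqrt{\Delta\tau_h})$.

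For the $\tanh$ term, I will subtract the exact correction $-\sum_d\log(1-u_d^2)$ from the stabilized one $-\sum_d\log(1-u_d^2+10^{-6})$. This leaves $\sum_d\log\bigl(1+10^{-6}/(1-u_d^2)\bigr)$ and establishes the stated identity. I will fix the sign convention so that the Jacobian factor in~\eqref{eq:joint_path_density} matches the surrogate.

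For the bound, I will apply the triangle inequality. Each $\lvert\log c_h\rvert$ is at most $\max\{\lvert\log\underline\sigma\rvert,\lvert\log\overline\sigma\rvert\}$, because $\log$ is monotone and $c_h\in[\underline\sigma,\overline\sigma]$. With $H$ steps and the factor $D$, this gives the first term. Each stabilization term is nonnegative and at most $\log(1+10^{-6}/\kappa)$ since $1-u_d^2\ge\kappa$, which gives the second term.

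The main obstacle is bookkeeping rather than analysis. The implementation (equation~\eqref{eq:flow_rollout}) injects noise as $\sigma_{\theta_i}\epsilon_h\Delta t$, while the theorem is phrased with a schedule $\sigma_{\tau_h}\sqrt{\Delta\tau_h}$. I will state explicitly that $\sigma_{\tau_h}\sqrt{\Delta\tau_h}$ denotes the effective per-step standard deviation of the Gaussian transition, whatever its parametrization, so that the Jacobian factor $c_h^{-D}$ is exactly what the surrogate omits. The remark that this term is parameter-independent then holds only when the schedule is fixed, which is the hypothesis of the final sentence of Theorem~\ref{thm:path_score_approx}. I will also note that the boundary index conventions ($a_0$ versus $a_{\tau_0}$) must be aligned so the base densities genuinely cancel.
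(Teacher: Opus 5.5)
Your proposal is correct and follows the paper's proof essentially step for step: you reparametrize each Euler--Maruyama transition as $a^i_{\tau_{h+1}}=\mu_{\theta_i}+\sigma_{\tau_h}\sqrt{\Delta\tau_h}\,\epsilon^i_h$ to isolate the omitted $-D\log(\sigma_{\tau_h}\sqrt{\Delta\tau_h})$ factors, compare the stabilized and exact $\tanh$ corrections, and finish with monotonicity of $\log$ and the triangle inequality. Your side remark that the implemented noise $\sigma_{\theta_i}\epsilon_h\Delta t$ is learned, so the omitted term would not be parameter-independent there, is a fair caveat that the paper's proof does not address; it does not affect the theorem under its stated fixed-schedule hypothesis.
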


\begin{proof}
From the Euler--Maruyama update in~\eqref{eq:sde_flow_policy}, we can write
\begin{equation}
    a_{\tau_{h+1}}^i
=
\mu_{\theta_i}(a_{\tau_h}^i,s,\tau_h)
+
\sigma_{\tau_h}\sqrt{\Delta\tau_h}\,
\epsilon_h^i,
\end{equation}

where $\epsilon_h^i\sim\mathcal N(0,I)$. Therefore, the exact transition density is
\begin{equation}
    p_{\theta_i}
\left(
a_{\tau_{h+1}}^i
\mid
a_{\tau_h}^i,s
\right)
=
\mathcal N
\left(
a_{\tau_{h+1}}^i;
\mu_{\theta_i},
\sigma_{\tau_h}^2\Delta\tau_h I
\right).
\end{equation}

Using the reparameterization above,
\begin{equation}
    \log p_{\theta_i}
\left(
a_{\tau_{h+1}}^i
\mid
a_{\tau_h}^i,s
\right)
=
\sum_{d=1}^{D}
\log\mathcal N(\epsilon_{h,d}^i;0,1)
-
D\log
\left(
\sigma_{\tau_h}\sqrt{\Delta\tau_h}
\right).
\end{equation}

Summing over all $H$ transitions and adding the Gaussian base density gives
\begin{equation}
\begin{aligned}
        \ell_{\theta_i}
&=
\sum_{d=1}^{D}\log\mathcal N(a_{0,d}^i;0,1)
+
\sum_{h=0}^{H-1}\sum_{d=1}^{D}
\log\mathcal N(\epsilon_{h,d}^i;0,1)\\
&-
D\sum_{h=0}^{H-1}
\log
\left(
\sigma_{\tau_h}\sqrt{\Delta\tau_h}
\right)
-
\sum_{d=1}^{D}
\log\left(1-(u_d^i)^2\right).
\end{aligned}
\end{equation}

Comparing this expression with~\eqref{eq:path_score_surrogate}, we obtain
\begin{equation}
    \ell_{\theta_i}-\widetilde{\ell}^{~i}
=
-D\sum_{h=0}^{H-1}
\log
\left(
\sigma_{\tau_h}\sqrt{\Delta\tau_h}
\right)
+
\sum_{d=1}^{D}
\log
\left(
\frac{1-(u_d^i)^2+10^{-6}}
{1-(u_d^i)^2}
\right),
\end{equation}

which is equivalent to the stated equality.

Since
\begin{equation}
    1-(u_d^i)^2\ge\kappa,
\end{equation}

we have
\begin{equation}
    0
\le
\log
\left(
1+\frac{10^{-6}}{1-(u_d^i)^2}
\right)
\le
\log\left(1+\frac{10^{-6}}{\kappa}\right).
\end{equation}

Moreover,
\begin{equation}
    \left|
\log
\left(
\sigma_{\tau_h}\sqrt{\Delta\tau_h}
\right)
\right|
\le
\max\{
|\log\underline{\sigma}|,
|\log\overline{\sigma}|
\}.
\end{equation}

Applying the triangle inequality yields the desired bound.
\end{proof}

The path score $\widetilde{\ell}^{i}$ is computed jointly with the action during sampling and serves as a tractable surrogate for the marginal log-policy $\log\pi_{\theta_i}(a^i\mid s)$ in the actor objective. Since it is accumulated along the sampling trajectory, no additional likelihood evaluation is required after action generation.

In the one-step setting, the flow trajectory consists of a single transition from the initial latent $a_1^i$ to the final latent $a_0^i$. The resulting path score contains the Gaussian base density, the stochastic transition density, and the change-of-variables correction induced by the final $\tanh$ transformation. Thus, even with one-step generation, $\widetilde{\ell}^{~i}$ provides a tractable scalar likelihood surrogate that can be directly incorporated into entropy-regularized actor optimization.

Importantly, Theorem~\ref{thm:path_score_approx} shows that the practical approximation $\widetilde{\ell}^{i}$ differs from the exact path score $\ell_{\theta_i}$ only by a parameter-independent normalization term and a bounded numerical stabilization error. Therefore, using $\widetilde{\ell}^{i}$ preserves the relevant optimization behavior while avoiding the intractable marginalization over intermediate flow states.

\subsection{Relationship to Baseline Algorithms}\label{appendix:discussion}

OMAF differs from existing Gaussian policies and diffusion policies in both policy parameterization and likelihood estimation. Compared with Gaussian policies, such as HARL~\citep{liumaximum,zhong2024heterogeneous}, OMAF parameterizes the policy through a flow velocity field rather than directly specifying a fixed-form Gaussian distribution. Since the marginal likelihood of a flow policy is generally intractable, we use the path score surrogate introduced in Section~\ref{method:flow} to provide a tractable alternative for entropy-regularized policy optimization. More importantly, even in the one-step setting, the resulting action distribution is fundamentally different from a Gaussian policy. Specifically, the velocity network takes a Gaussian latent variable as input and transforms it through a nonlinear neural network, allowing the induced action distribution to exhibit substantially richer and more complex distributional structures than those obtained by directly predicting a Gaussian mean and standard deviation. Thus, the expressiveness of OMAF arises not from explicitly increasing the number of distributional parameters, but from the nonlinear transformation of Gaussian noise through the learned velocity field.

Compared with diffusion policies, e.g., OMAD~\citep{li2026omad}, OMAF differs primarily in the quantity learned to parameterize the generative process. Diffusion policies learn a score function, typically characterized by $\nabla_{a_{\tau}} \log p_\tau(a_{\tau},s,\tau)$, whereas OMAF directly learns the velocity field governing the transport of samples along the flow trajectory. Under appropriate transport formulations, the velocity and score can be closely related, and both approaches generally require a trajectory-based surrogate rather than directly evaluating the marginal likelihood of the generated action. Consequently, their generation processes share a similar structure: both transform an initial noisy sample into a terminal action through a learned continuous-time dynamics. The key distinction is that the learned velocity field in OMAF enables effective few-step, and in our setting one-step, action generation, whereas diffusion policies typically rely on iterative denoising across multiple steps. OMAF therefore retains the flexible generative modeling of noise-to-action transport while substantially reducing the computational cost of action generation.

Existing flow-based reinforcement learning methods, such as SAC-Flow~\citep{zhang2026sac} and FlowRL~\citep{lyu2026flow}, primarily focus on single-agent decision making, where a flow policy models the action distribution conditioned on the observation of an individual agent. In contrast, OMAF is designed specifically for multi-agent reinforcement learning and adopts a centralized training with decentralized execution (CTDE) framework to model coordinated behaviors among multiple agents.

This CTDE formulation is particularly important for flow-based policies in multi-agent settings. Although each individual flow policy generates actions locally, the centralized training process provides a global learning signal that allows the policies to account for the effects of other agents' actions. Consequently, OMAF can learn coordinated behaviors while preserving decentralized execution and avoiding the need for global information at deployment time.

Another key distinction is that OMAF is designed around one-step flow-based action generation. Rather than directly extending a single-agent flow policy to multiple agents, OMAF jointly optimizes the decentralized flow policies with the centralized critic under the multi-agent CTDE framework. This design enables expressive continuous action modeling while substantially reducing the sampling overhead associated with iterative generative policies. These differences make OMAF particularly suited to online multi-agent coordination, where both expressive action distributions and efficient interaction with the environment are essential.

\section{Experimental Details}\label{appendix:exp}
\subsection{Experimental Environments}\label{appendix:env}

\begin{figure*}[ht]
\begin{center}
\centering
\subfloat[Cooperative Navigation]{
\includegraphics[width=0.2\textwidth]{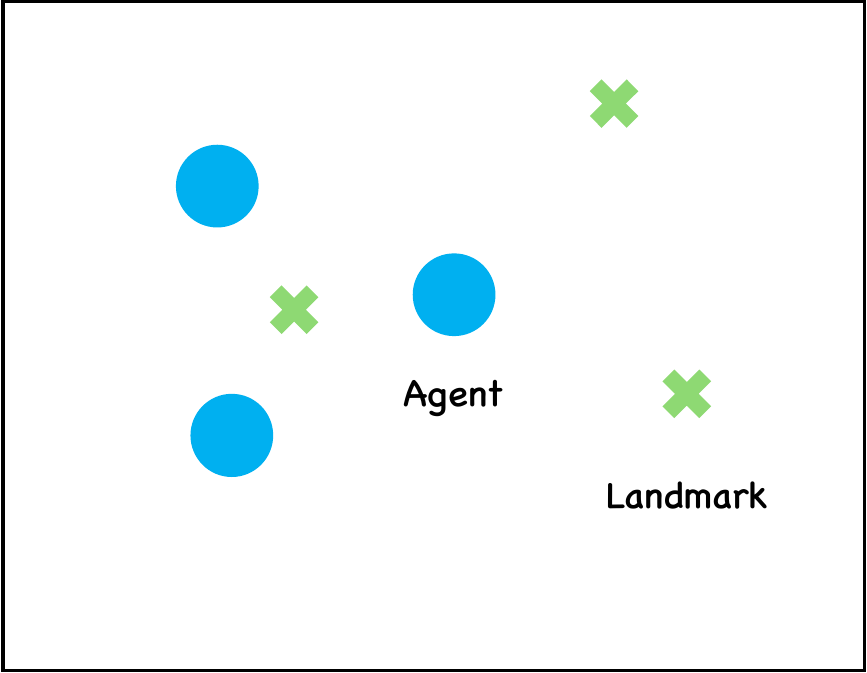}
\label{Fig3-1}
}
\quad
\subfloat[Ant]{
\includegraphics[width=0.2\textwidth]{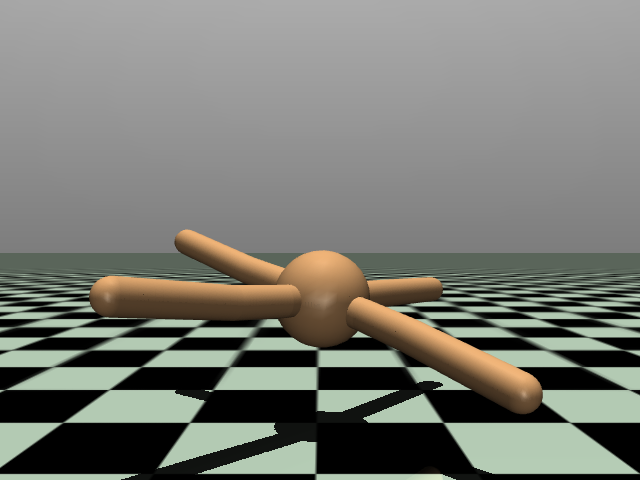}
\label{Fig3-3}
}
\quad
\subfloat[HalfCheetah]{
\includegraphics[width=0.2\textwidth]{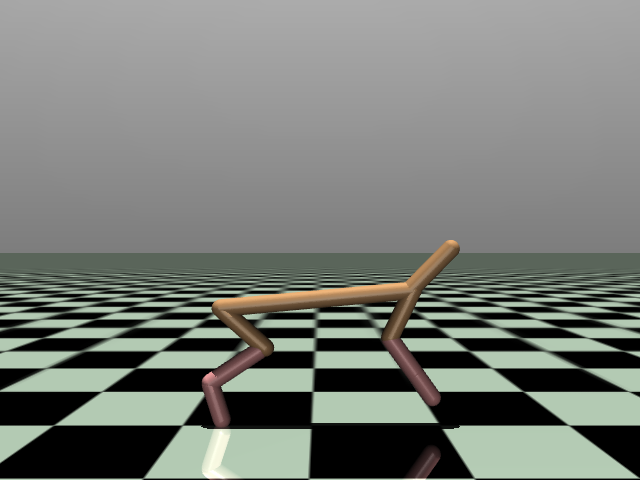}
\label{Fig3-4}
}
\quad
\subfloat[CoupledHalfCheetah]{
\includegraphics[width=0.2\textwidth]{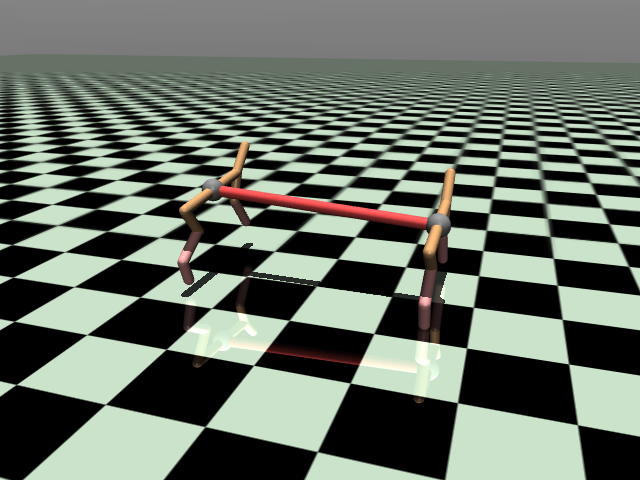}
\label{Fig3-7}
}
\quad
\subfloat[Walker2d]{
\includegraphics[width=0.2\textwidth]{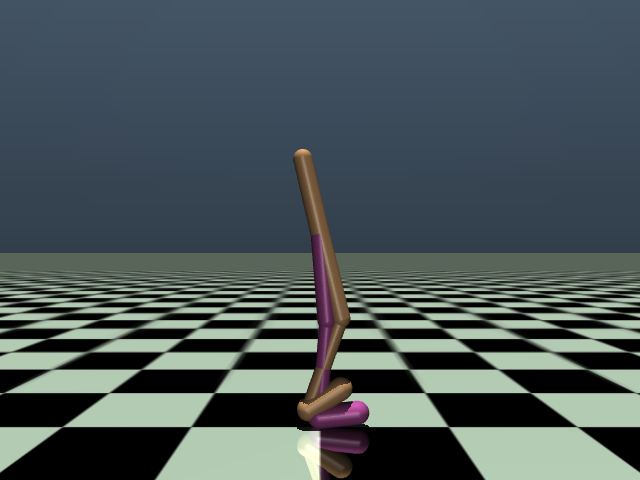}
\label{Fig3-5}
}
\quad
\subfloat[Hopper]{
\includegraphics[width=0.2\textwidth]{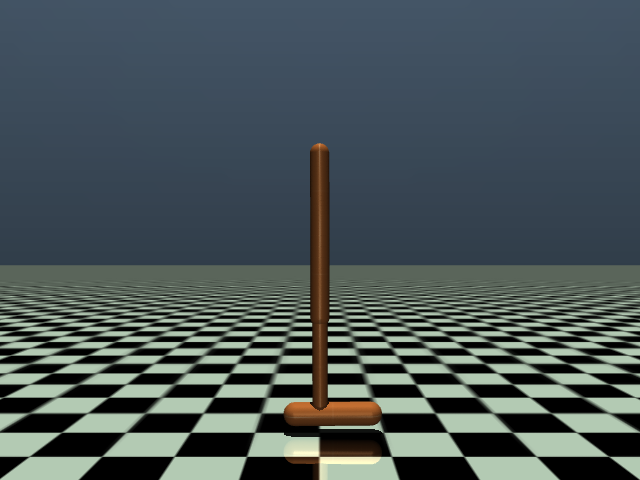}
\label{Fig3-2}
}
\quad
\subfloat[Swimmer]{
\includegraphics[width=0.2\textwidth]{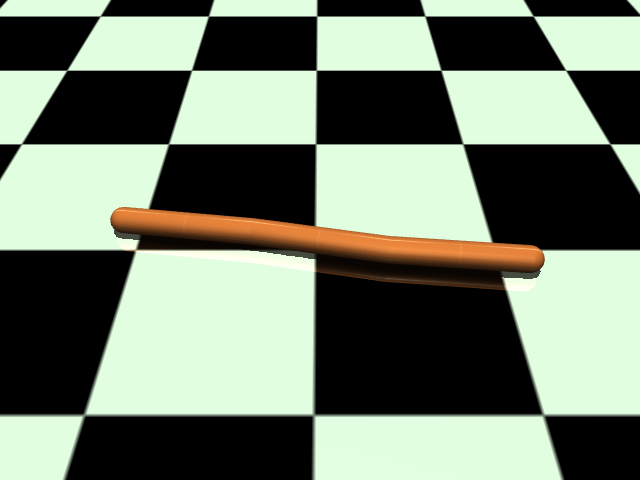}
\label{Fig3-6}
}
\caption{Multi-agent particle environments (MPE) and Multi-agent MuJoCo (MAMuJoCo).}
\label{fig:environments}
\end{center}
\end{figure*}

We conduct experiments on two widely used multi-agent continuous-control
benchmarks, Multi-Agent Particle Environments (MPE)~\citep{lowe2017multi}\footnote{https://pettingzoo.farama.org/environments/mpe/}
and Multi-Agent MuJoCo (MAMuJoCo)~\citep{peng2021facmac}\footnote{https://robotics.farama.org/envs/MaMuJoCo/}. In MPE, we consider
one representative task, \textit{Cooperative Navigation}. Cooperative Navigation requires multiple agents
to jointly cover a set of landmarks while avoiding collisions
(Figure~\ref{Fig3-1}).

For MAMuJoCo, we consider these locomotion environments:
\textit{Ant} (Figure~\ref{Fig3-3}), \textit{HalfCheetah}
(Figure~\ref{Fig3-4}), \textit{CoupledHalfCheetah}
(Figure~\ref{Fig3-7}), \textit{Walker2d} (Figure~\ref{Fig3-5}), and
\textit{Swimmer} (Figure~\ref{Fig3-6}). MAMuJoCo converts conventional
single-agent MuJoCo control tasks into multi-agent settings by assigning
different subsets of the robot's actuators to different agents. The resulting
agents jointly control the same physical system, making coordinated
continuous control essential for successful locomotion. The configurations
used in our experiments are summarized below.

\textbf{Ant} ($2\times4$, $2\times4$d, $4\times2$).
Ant is a four-legged robot with coupled three-dimensional dynamics. The
different agents control disjoint subsets of the robot's actuators, requiring
coordinated limb movements to maintain stability while producing forward
locomotion.

\textbf{HalfCheetah} ($2\times3$, $6\times1$).
HalfCheetah is a planar articulated robot whose actuators are distributed
among multiple agents. Effective locomotion depends on coordinating the
different body segments to generate consistent forward movement.

\textbf{CoupledHalfCheetah}.
CoupledHalfCheetah consists of two HalfCheetah robots connected by an elastic
tendon, requiring agents to coordinate their independently controlled robots
to achieve efficient forward locomotion.

\textbf{Hopper} ($3\times 1$).
Hopper is a two-dimensional one-legged robot controlled through three joint
torques, with the goal of maintaining balance while hopping forward. In the
multi-agent setting, the three joints are partitioned among three agents,
with each agent controlling one joint.

\textbf{Walker2d} ($2\times 3$).
Walker2d is a two-legged robot with an upright-body constraint. Since control
is distributed across agents, successful locomotion requires jointly
maintaining body stability and generating forward motion.

\textbf{Swimmer} ($2\times 1$).
Swimmer is an articulated robot operating in a fluid-like environment. Its
forward movement relies on coordinated motion across multiple connected
body segments, making temporal coordination between agents important for
effective locomotion.

Compared with the conventional single-agent MuJoCo formulation, MAMuJoCo
introduces decentralized control by distributing the robot's actuators across
multiple agents. Although the agents interact through a shared physical
system, each agent controls only a subset of the action dimensions. As a
result, the effect of an individual action depends on the states and actions
of the other agents, creating strong physical coupling and coordination
requirements. These characteristics make MAMuJoCo a useful benchmark for
evaluating multi-agent policy learning in high-dimensional continuous
control.

\begin{table}[ht]
\centering
\caption{Hyperparameters used across all tasks in the MPE and MAMuJoCo environments.}
\label{tab:hyperparameters}
\renewcommand{\arraystretch}{1.25}
\setlength{\tabcolsep}{0pt}

\begin{tabular}{
    @{}
    >{\centering\arraybackslash}p{0.62\linewidth}
    >{\centering\arraybackslash}p{0.15\linewidth}
    @{}
}
\toprule
\textbf{Name of the Hyperparameter} & \textbf{Value} \\
\midrule

Timesteps to Start Learning $L_{\mathrm{init}}$
& 50000 \\
\midrule

Critic Soft Update Coefficient $\rho$
& 0.0 \\
\midrule

Discount Factor $\gamma$
& 0.99 \\
\midrule

Policy Delay $d_l$
& 3 \\
\midrule

Batch Size
& 512 \\
\midrule

Replay Buffer Size $\lvert\mathcal{D}\rvert$
& 1000000 \\
\midrule

Initial Entropy Coefficient $\alpha$
& 0.2 \\
\midrule

Target Entropy $\mathcal{H}_{\mathrm{target}}$
& 0 \\
\midrule

Number of Critics
& 2 \\
\midrule

Number of Samples for Softmax Q-Value Estimation $K$
& 10 \\
\midrule

Number of Flow Steps $H$
& 1 \\
\midrule

Transformer Hidden Dimension
& 96 \\
\midrule

Number of Transformer Layers
& 2 \\
\midrule

Number of Attention Heads
& 4 \\
\midrule

Batch Normalization Momentum
& 0.99 \\
\midrule

Optimizer
& Adam \\
\midrule

Adam $\beta_1$
& 0.5 \\
\midrule

Adam $\beta_2$
& 0.99 \\

\bottomrule
\end{tabular}
\end{table}

\subsection{Hyperparameters}\label{appendix:parameters}
For the critic, we employ a centralized Multi-Layer Perceptron (MLP) to
estimate a scalar $Q$-value from the joint state and joint action. The
state-action input is concatenated and normalized using Batch Normalization
before being fed into the network. The critic consists of two hidden layers
with $1024$ units per layer and ReLU activations, followed by a single scalar
output. We maintain two critics to reduce estimation bias and improve the
stability of value learning.

For the actor, we employ the flow-based policy described in
Appendix~\ref{appendix:flow_policy}. The flow velocity is parameterized by an
observation-conditioned Transformer, which takes the current latent action,
flow time, and local observation as inputs. We use a one-step flow
transformation ($H=1$), while the Transformer architecture and the
corresponding policy optimization details are provided in
Appendix~\ref{appendix:flow_policy}.

To stabilize critic training, we maintain a target policy network for
generating next-step actions. The target policy is updated according to an
exponential moving average with coefficient $\rho$. In our experiments,
$\rho=0$, corresponding to a hard-update scheme in which the target policy
is updated directly rather than through a soft moving average. The critic
uses a discount factor of $\gamma=0.99$ for computing discounted target
values.

Table~\ref{tab:hyperparameters} summarizes the hyperparameters shared across
all tasks, including training settings, network architectures, flow
configuration, normalization, and optimizer parameters. Table
\ref{tab:task_hyperparameters} reports task-specific optimization
hyperparameters, including the policy and critic learning rates, gradient
clipping thresholds, and the temperature $\beta$ used for softmax
$Q$-value estimation. Unless otherwise specified, all experiments are
conducted with $3$ independent random seeds.

\begin{table}[ht]
\centering
\caption{Task-specific optimization hyperparameters for the MPE and MAMuJoCo environments.}
\label{tab:task_hyperparameters}
\renewcommand{\arraystretch}{1.25}
\setlength{\tabcolsep}{5pt}
\resizebox{\linewidth}{!}{
\begin{tabular}{
>{\centering\arraybackslash}p{0.34\linewidth}
>{\centering\arraybackslash}p{0.13\linewidth}
>{\centering\arraybackslash}p{0.13\linewidth}
>{\centering\arraybackslash}p{0.14\linewidth}
>{\centering\arraybackslash}p{0.18\linewidth}}
\toprule
Task
& Policy Learning Rate
& Critic Learning Rate
& Gradient Clip Norm
& Softmax Temperature $\beta$ \\
\midrule

Cooperative Navigation 5 Agents
& $3 \times 10^{-6}$
& $1 \times 10^{-3}$
& $10000$
& $1 \times 10^{-1}$ \\
\midrule

Ant $2 \times 4$
& $3 \times 10^{-4}$
& $3 \times 10^{-3}$
& $10000$
& $1 \times 10^{-2}$ \\
\midrule

Ant $2 \times 4$d
& $1 \times 10^{-4}$
& $3 \times 10^{-3}$
& $10000$
& $1 \times 10^{-2}$ \\
\midrule

Ant $4 \times 2$
& $3 \times 10^{-5}$
& $3 \times 10^{-3}$
& $10000$
& $1 \times 10^{-2}$ \\
\midrule

HalfCheetah $2 \times 3$
& $3 \times 10^{-4}$
& $3 \times 10^{-3}$
& $10$
& $1 \times 10^{-1}$ \\
\midrule

HalfCheetah $6 \times 1$
& $3 \times 10^{-4}$
& $1 \times 10^{-3}$
& $1$
& $1 \times 10^{-2}$ \\
\midrule

Walker2d $2 \times 3$
& $3 \times 10^{-5}$
& $3 \times 10^{-3}$
& $10000$
& $1 \times 10^{-3}$ \\
\midrule

Hopper $3 \times 1$
& $3 \times 10^{-4}$
& $1 \times 10^{-3}$
& $10000$
& $1 \times 10^{-2}$ \\
\midrule

CoupledHalfCheetah
& $3 \times 10^{-4}$
& $1 \times 10^{-3}$
& $10000$
& $3 \times 10^{-1}$ \\
\midrule

Swimmer $2 \times 1$
& $1 \times 10^{-3}$
& $3 \times 10^{-4}$
& $0.1$
& $1 \times 10^{-2}$ \\

\bottomrule
\end{tabular}}
\end{table}

\subsection{Additional Experimental Results}
\subsubsection{Additional Ablation Studies}
In this section, we perform ablation studies to investigate the impact of three key design choices in OMAF: the denoising steps of the flow policy, the Transformer depth of the velocity network, and the softmax temperature for Q-value estimation. We systematically vary these components to understand their individual effects on policy performance and training efficiency.

\begin{figure}[ht]
    \centering
    \begin{subfigure}[b]{0.31\textwidth}
        \centering
        \includegraphics[width=\textwidth]{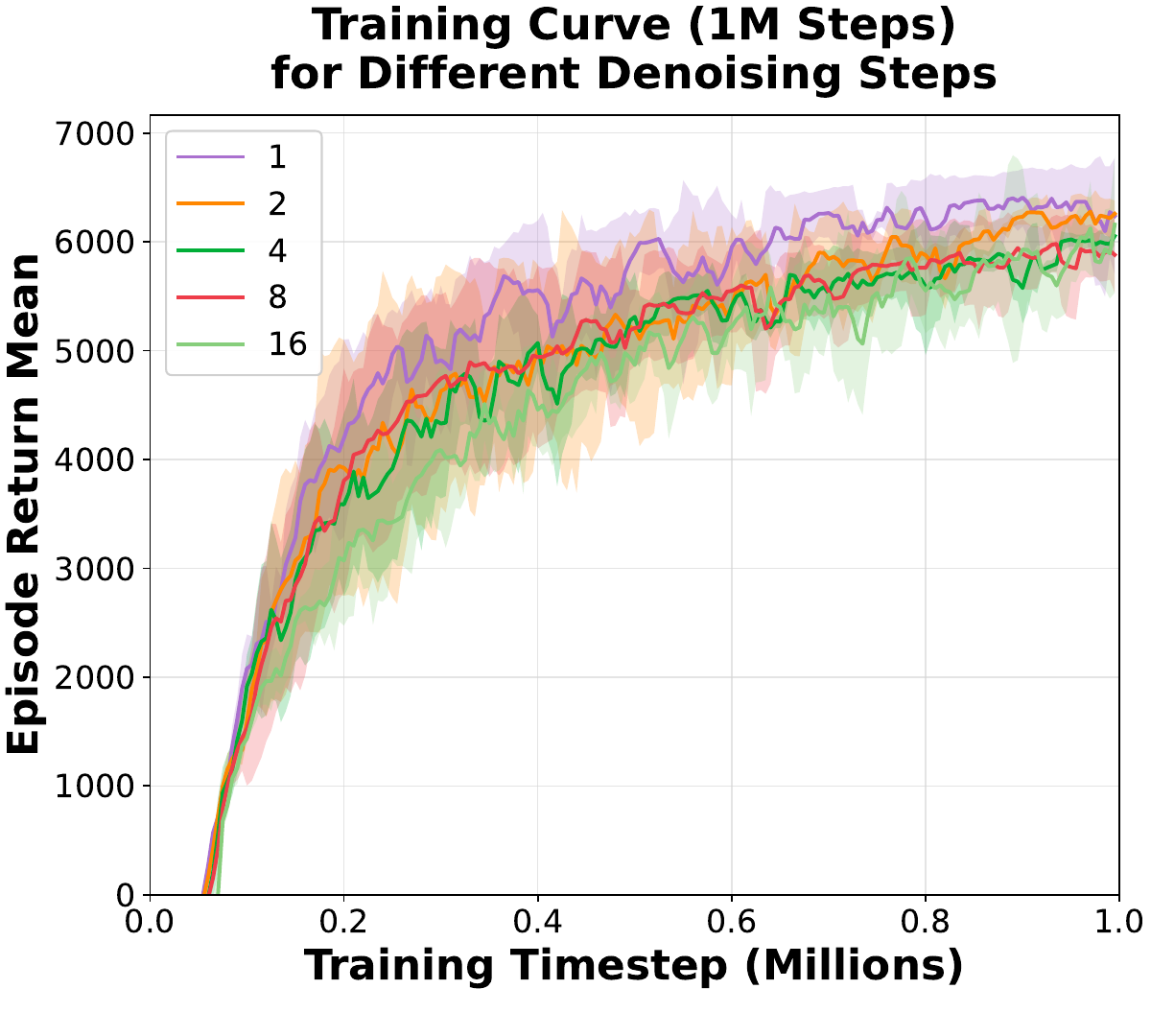}
        \caption{Ablation on denoising steps.}
        \label{fig:denoising_steps}
    \end{subfigure}
    \begin{subfigure}[b]{0.31\textwidth}
        \centering
        \includegraphics[width=\textwidth]{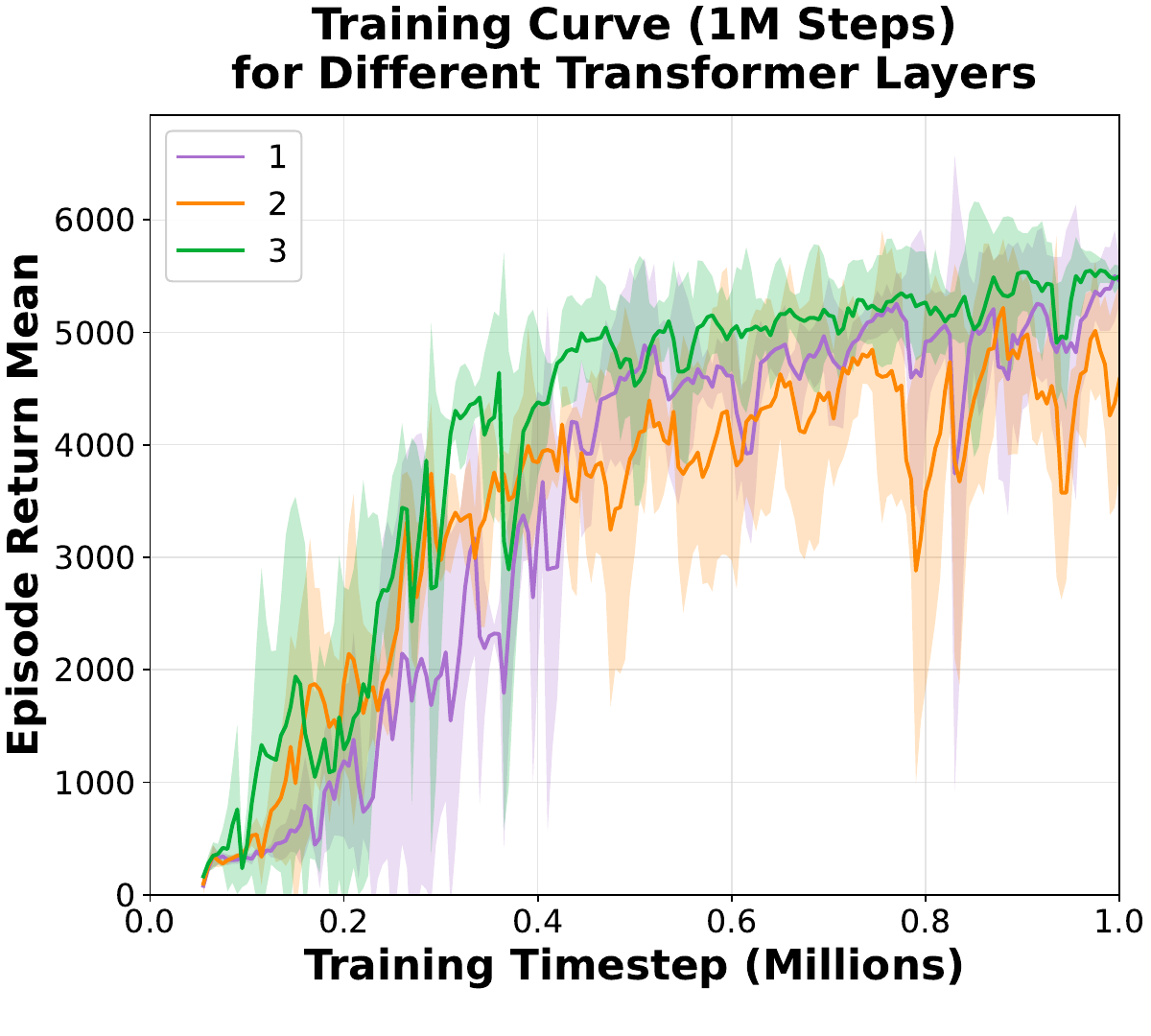}
        \caption{Ablation on denoising steps.}
        \label{fig:n_layers}
    \end{subfigure}
    \begin{subfigure}[b]{0.31\textwidth}
        \centering
        \includegraphics[width=\textwidth]{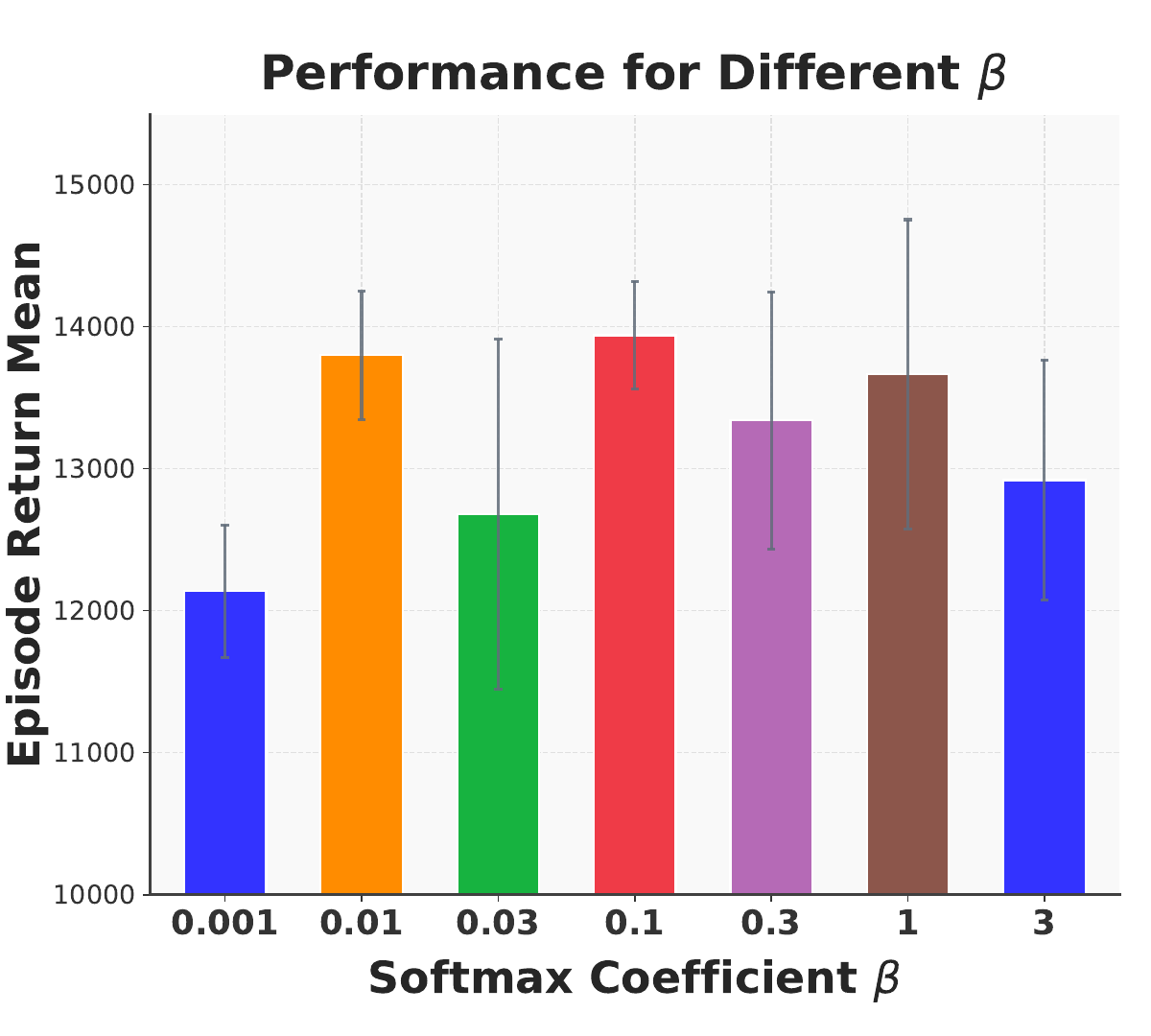}
        \caption{Ablation on softmax $\beta$.}
        \label{fig:softmax_beta}
    \end{subfigure}
    \caption{Analysis of OMAF design choices on Ant $2\times4$. 
    (a) Performance with different numbers of denoising steps over 1 million training steps, demonstrating the effectiveness of one-step denoising for online policy optimization. (b) Performance with different numbers of Transformer depth over 1 million training steps. (c) Performance under different softmax temperature settings.}
    \label{fig:ablation_analysis}
\end{figure}

\paragraph{Effect of Denoising Steps.} Figure~\ref{fig:denoising_steps} compares OMAF with different numbers of denoising steps on the Ant2x4 environment over 1M training steps. All variants are able to learn effective policies and achieve more than 6,500 episodic returns, demonstrating the robustness of the proposed flow policy across different denoising configurations. Notably, the one-step variant achieves the best performance, while requiring only a single function evaluation to generate an action. This result highlights the efficiency of one-step denoising for online policy optimization. In contrast, increasing the number of denoising steps does not lead to consistent performance improvements. A potential explanation is that our policy is updated through Q-value guidance, where the policy gradient is backpropagated through the generated action. With multiple denoising steps, the action depends on a longer sequence of flow transformations, which may introduce additional optimization effects and make the Q-guided policy update less direct. Therefore, one-step flow generation provides a particularly favorable trade-off between expressive policy modeling and efficient gradient-based policy optimization.
\paragraph{Impact of Transformer Depth.} To evaluate how network architecture depth influences policy learning, we compare OMAF across different numbers of Transformer layers ($1$, $2$, and $3$). As shown in the training curves, a shallower network (depth = 1) yields a simpler policy, but its limited representation capacity slows down early learning and bounds the overall return. Increasing the depth enhances policy expressiveness, with depth = 3 achieving faster convergence and superior final performance. However, deeper architectures also introduce higher training difficulty and potential optimization instability, as reflected in the variance observed with intermediate settings. These results emphasize the need to strike a balance between policy expressiveness and optimization stability when configuring the network depth. 
\paragraph{Effect of Softmax Temperature.} We further investigate the effect of the softmax temperature $\beta$ on the HalfCheetah $2\times3$ environment with $\beta\in\{0.001,0.01,0.03,0.1,0.3,1,3\}$. The experimental results are presented in the bar plot in Fig.~\ref{fig:softmax_beta}. The results indicate that the temperature needs to be appropriately selected to achieve effective softmax-based target value correction. In particular, moderate temperature values provide a better balance between smoothing the Q-values and preserving their relative differences, which facilitates more effective Q-guided policy optimization.

\subsubsection{Exploration Details}

To examine the exploration benefits of our expressive flow policies, we analyze the state distributions collected in the replay buffers after the first 250K training steps on Ant $2\times4$~\citep{li2026omad}. We discretize the 2D state space (dimensions 23 and 13) over $[-4,8.5]\times[-17.5,17.5]$ with a grid interval of $0.5$, yielding 1750 bins. OMAF visits 1,162 distinct bins, achieving the broadest state-space coverage, compared with 958 for OMAD, 942 for HATD3, 861 for MAFlowRL, 826 for HASAC, and 819 for MAMFPO. The uniquely explored regions further show that OMAF's expressive entropy-regularized flow policy enables broader exploration and helps avoid getting trapped in local optima.

\begin{figure}[ht]
    \begin{center}
    \includegraphics[width=0.9\textwidth]{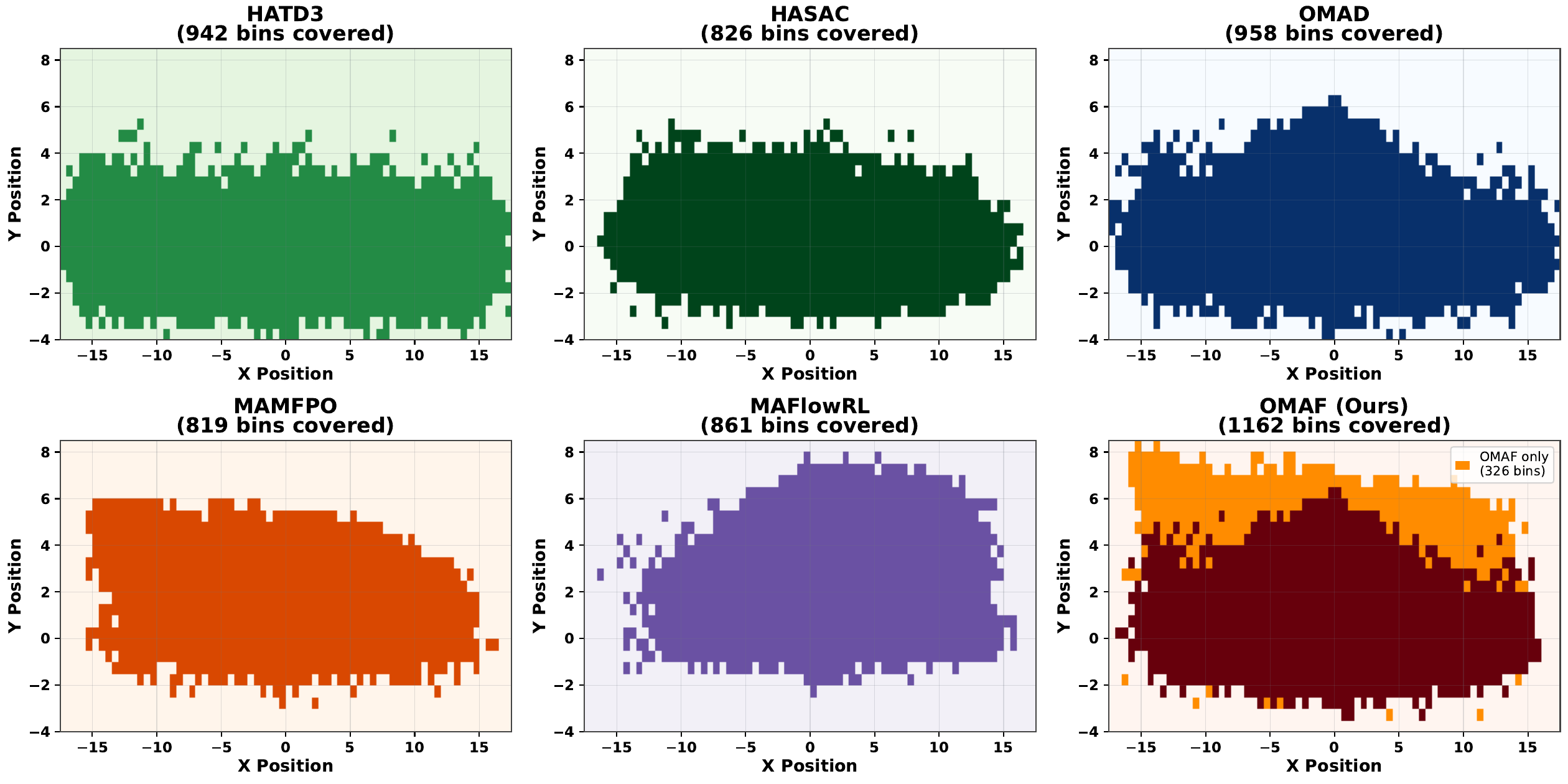}

    \caption{State coverage comparison on representative dimensions (23 and 13) at 250K steps. We visualize the state occupancy within the replay buffers for HATD3, HASAC, diffusion-based OMAD, flow-based MAFlowRL and MAMFPO, and OMAF. Colored regions indicate visited states. OMAF achieves the broadest coverage, where orange regions are uniquely explored by OMAF compared with the diffusion-based OMAD, demonstrating superior exploration.}
    \end{center}
\end{figure}

\subsubsection{Scalability}

As the number of agents grows, multi-agent policy learning becomes increasingly challenging due to the growing dimensionality of joint observations and actions, as well as the stronger coordination dependencies among agents. These challenges are particularly pronounced for expressive generative policies, where action generation may introduce additional computational overhead. In conventional diffusion-based policies, multiple denoising iterations can further increase the cost of generating joint actions, making efficient scaling an important consideration for large-agent systems. Moreover, directly increasing the number of agents in physics-based environments such as MAMuJoCo changes the underlying system dynamics, making controlled scalability studies difficult. We therefore evaluate OMAF on Cooperative Navigation, where the number of agents can be varied while preserving the same underlying task structure.

Table~\ref{tab:cooperative_navigation_scalability} reports the performance with 10 agents and 20 agents. OMAF outperforms the considered baselines, including conventional centralized-critic methods and existing flow- or diffusion-based approaches. The relatively small variance also indicates stable coordination among a larger number of agents. More importantly, OMAF generates actions using a single flow step, avoiding the repeated denoising procedure required by multi-step generative policies. This substantially reduces the computation associated with joint action generation as the number of agents increases. Overall, the results demonstrate that OMAF can maintain effective multi-agent coordination in a substantially larger system while retaining an efficient action-generation mechanism, providing a practical basis for scaling flow-based MARL to larger agent populations.

\begin{table}[ht]
    \centering
    \caption{Performance comparison on Cooperative Navigation with $10$ and $20$ agents.}
    \label{tab:cooperative_navigation_scalability}
    \resizebox{\linewidth}{!}{
    \begin{tabular}{c c c c c c c}
        \toprule
        Task
        & HATD3
        & HASAC
        & MAMFPO
        & MAFlowRL
        & OMAD
        & \textbf{OMAF (Ours)} \\
        \midrule

        \makecell{Cooperative Navigation \\ $N=10$}
        &  $-464.3\pm16.8$
        &  $-471.3\pm14.0$
        & $-478.4 \pm 8.3$
        & $-467.2\pm 23.4$
        & $-445.1\pm3.3$
        & $\mathbf{-429.4 \pm 8.3}$ \\
        \midrule

        \makecell{Cooperative Navigation \\ $N=20$}
        & $-1445.1 \pm 41.3$
        & $-1452.1\pm 18.3$
        & $-1680.7 \pm 45.8$
        & $-1540.6 \pm 56.1$
        & $-1801.4 \pm 36.8$
        & $\mathbf{-1433.4 \pm 14.7}$ \\
        \bottomrule
    \end{tabular}
    }
\end{table}

\subsection{Discussion}

\subsubsection{Training and Evaluation Protocol}

For a fair and consistent comparison, we evaluate all algorithms using three independent random seeds, uniformly fixed to 0, 42, and 3407 across all methods. All other experimental settings are kept consistent with the respective algorithm implementations to ensure that the comparison is not affected by differences in random initialization. At each evaluation point, we roll out the current policy for 10 episodes and use the average episodic return as the evaluation performance; we then report the mean and standard deviation across the three random seeds.

To evaluate performance under a limited interaction budget, we compare OMAF with all baseline methods within 2M training steps. This evaluation protocol emphasizes the early-stage learning performance of each method and allows us to assess how effectively the algorithms utilize limited environment interactions. Under this setting, OMAF achieves strong performance within substantially fewer training steps, demonstrating its advantage in the early stages of online learning.

We further examine the learning curves over longer training horizons. While several baseline methods continue to improve with additional environment interactions, OMAF reaches its converged performance earlier. This indicates that the baselines can benefit from prolonged training, but require more interaction steps to realize their full performance. In contrast, OMAF is built upon an efficient generative policy that enables effective policy learning with fewer environment interactions, highlighting its interaction efficiency for online multi-agent reinforcement learning.

\subsubsection{Discussion about the Performance Metric in the Radar Plot}

\paragraph{Performance} The Performance dimension in the radar plot is computed from the peak evaluation returns across ten benchmark environments: Ant $2\times4$, Ant $4\times2$, Ant $2\times4$d, Walker2d $2\times3$, HalfCheetah $2\times3$, HalfCheetah $6\times1$, Coupled HalfCheetah, Hopper $3\times1$, Swimmer $2\times1$, and Cooperative Navigation. For each environment and algorithm, we first identify the peak evaluation return and average it across the three random seeds (0, 42, and 3407).

To account for the different reward scales across environments, we independently normalize the mean peak returns of all algorithms within each environment using min-max normalization:
\begin{align}
    \frac{R_{e,m}-\min_{m'}R_{e,m'}}
{\max_{m'}R_{e,m'}-\min_{m'}R_{e,m'}}
\times 100,
\end{align}

where $R_{e,m}$ denotes the mean peak return of method $m$ on environment $e$. A higher return always corresponds to a higher normalized score. The Performance score is then obtained by averaging the normalized scores over the ten environments.

Following this procedure, the Performance scores shown in the radar plot are 99.54 for OMAF, 81.60 for OMAD, 42.12 for MAMFPO, 39.13 for MAFlowRL, 11.67 for HASAC, and 6.24 for HATD3. 

\paragraph{Training Time} The Training Time dimension measures the wall-clock training efficiency on Ant $2\times4$ over 2M training steps, reported as seconds per 100K training steps. The measured times are 84.744, 97.602, 1427.891, 1701.290, 1807.912, and 2981.520 s/100K steps for HASAC, HATD3, OMAF, OMAD, MAMFPO, and MAFlowRL.

For visualization, the training times are mapped to a 0-100 scale such that a higher score indicates lower computational cost. This yields Training Time scores of 99.41, 99.32, 90.00, 88.09, 87.34, and 79.12, respectively. Notably, OMAF achieves the highest training efficiency among the evaluated generative-policy-based methods, requiring less wall-clock training time than OMAD, MAMFPO, and MAFlowRL under the same experimental setting.

\paragraph{Sample Efficiency} The Sample Efficiency dimension measures the relative number of environment interactions required to reach a specified target return. For each environment, we define a target return threshold and record the number of training steps at which the mean learning curve across the three random seeds reaches this threshold for the first time. Methods that do not reach the target within the evaluation horizon are assigned a value of 1.0. The target return thresholds are 3510.4 for Ant 2x4, 2500 for Ant 4x2, 3620.8 for Ant 2x4d, 3370.7 for Walker2d 2x3, 5000 for HalfCheetah 2x3, 10000 for HalfCheetah 6x1, 1402.3 for Coupled HalfCheetah, 2800 for Hopper 3x1, 47.0 for Swimmer 2x1, and -120 for Cooperative Navigation 5.

For each environment, we compute the ratio based on the number of training steps required to reach the corresponding threshold and then average the ratios across the ten environments. This gives average ratios of 9.51 for OMAF, 5.61 for OMAD, 1.39 for HASAC, 2.32 for HATD3, 4.32 for MAMFPO, and 4.91 for MAFlowRL. For visualization, these values are normalized by the OMAF value of 9.51, resulting in Sample Efficiency scores of 100.00, 59.00, 14.62, 24.40, 45.43, and 51.63 for OMAF, OMAD, HASAC, HATD3, MAMFPO, and MAFlowRL, respectively. A higher score indicates that the target performance is reached with fewer environment interactions.

\paragraph{Exploration} The Exploration dimension measures the coverage of the visited state space during the early stage of training. Specifically, we collect the states encountered by each algorithm during the first 250K training steps on the Ant 2$\times$4 environment and construct a two-dimensional state occupancy map using the 15th and 25th state dimensions. We discretize the state space into a fixed grid with a bin width of 0.5, where the first dimension ranges from $-17.5$ to $17.5$ with 70 bins and the second dimension ranges from $-4$ to $8.5$ with 25 bins, resulting in 1,750 bins in total. Samples falling outside these ranges are discarded rather than assigned to boundary bins. A bin is considered visited if it contains at least one collected state. The resulting numbers of occupied bins are 942 for HATD3, 826 for HASAC, 819 for MAMFPO, 861 for MAFlowRL, 958 for OMAD, and 1,162 for OMAF.

For visualization, we apply min-max normalization across the six algorithms, using 819 as the minimum and 1,162 as the maximum $S_m = \frac{N_m}{1162}\times 100$,
where $N_m$ denotes the number of occupied bins for algorithm $m$. This yields Exploration scores of 100.00, 82.44, 81.07, 74.10, 71.08, and 70.48 for OMAF, OMAD, HATD3, MAFlowRL, HASAC, and MAMFPO, respectively. A higher score indicates broader coverage of the evaluated state space during the first 250K training steps.

\paragraph{Scalability} The Scalability dimension measures the performance of each algorithm under different numbers of agents. We use the Cooperative Navigation environment with $N=10$ and $N=20$ agents and compute the metric from the mean episodic returns at each scale. The reported standard deviations are not used in the calculation. For consistency with the naming of other baselines, IMFPO and IFlowRL are denoted as MAMFPO and MAFlowRL, respectively.

For each agent scale independently, we apply min-max normalization to the mean returns: $S_{N,m} =
\frac{R_{N,m}-R_{N,\min}}
{R_{N,\max}-R_{N,\min}}
\times 100,$
where $R_{N,m}$ is the mean return of algorithm $m$ at agent scale $N$. Since higher returns indicate better performance, a larger normalized score corresponds to better performance at the given scale. At $N=10$, the best and worst mean returns are $-429.4$ (OMAF) and $-478.4$ (MAMFPO), respectively. At $N=20$, they are $-1433.4$ (OMAF) and $-1801.4$ (OMAD), respectively.

We then average the normalized scores across the two agent scales to obtain the final Scalability score. For example, HATD3 obtains a score of $28.78$ at $N=10$ and $96.82$ at $N=20$, resulting in a final score of $62.80$. The resulting Scalability scores are 100.00 for OMAF, 62.80 for HATD3, 54.70 for HASAC, 46.86 for MAFlowRL, 33.98 for OMAD, and 16.40 for MAMFPO. A higher score indicates stronger performance across the evaluated agent scales.

\paragraph{GPU Hours} The GPU Hours dimension measures the computational cost required for 100K training steps on the Ant 2$\times$4 environment, where a higher score indicates lower computational cost. We use the measured training time in seconds per 100K steps and account for the total GPU compute capacity used by each algorithm. Evaluation time is not included in the calculation. IMFPO and IFlowRL are denoted as MAMFPO and MAFlowRL, respectively.

In our experiments, HASAC and HATD3 are trained on 8 NVIDIA 3090 GPUs. OMAF uses 8 NVIDIA 3090 GPUs and 4 NVIDIA A6000 GPUs, while MAMFPO and MAFlowRL use 4 NVIDIA 3090 GPUs and 8 NVIDIA A6000 GPUs. OMAD uses 600 NVIDIA V100 GPUs, 20 NVIDIA H20 GPUs, and 60 NVIDIA A100 GPUs. We estimate the computational cost as the aggregate FP32 peak throughput of the GPUs used by each algorithm multiplied by its wall-clock training time.

Importantly, OMAF achieves its strong performance with dramatically lower computational resource requirements than the diffusion-based OMAD, using only 12 GPUs compared with 680 GPUs in the reported training setup. This substantial reduction in GPU consumption highlights the exceptional training efficiency of OMAF, demonstrating that competitive or stronger policy performance can be achieved with only a small fraction of the computational resources required by iterative diffusion-based policy learning.

Because the resulting computational costs span a large range across methods, we first apply a logarithmic transformation and then linearly map the values to a 0--100 scale. The resulting GPU Hours scores are 100.00 for HASAC, 99.57 for HATD3, 90.00 for OMAF, 89.19 for MAMFPO, 87.66 for MAFlowRL, and 79.45 for OMAD. A higher score indicates lower estimated computational cost per 100K training steps.

\subsubsection{Discussion about the Discrete Action Space}

Our work focuses on efficient online multi-agent coordination in continuous control settings. Accordingly, we evaluate the proposed framework on representative continuous-control benchmarks, including MPE and MAMuJoCo, where agents interact through continuous action spaces and exhibit challenging coordination dynamics arising from physical coupling. We do not consider discrete-action benchmarks such as SMAC in this work, as the proposed flow-based policy is specifically designed to model continuous action distributions. Extending the framework to discrete action spaces, for example through discrete flow formulations or categorical action representations, would require additional methodological development and is left for future work.

\subsection{Visualization}

To complement the quantitative evaluation, we visualize the behaviors learned by OMAF across four representative MAMuJoCo tasks in Figure~\ref{fig:visualizationenv}. For each task, we show the agent configurations at timesteps $t\in{1,100,250,500}$ together with their instantaneous velocities. Across different environments, the agents progressively develop consistent locomotion patterns and maintain stable motion throughout the trajectories. In the CoupledHalfCheetah, Walker2d $2\times3$, and Ant $2\times4$ tasks, the agents exhibit coordinated movements despite their interactions being governed by the underlying coupled dynamics, while the Swimmer $2\times1$ agents maintain a coherent forward-moving pattern. The corresponding episode returns further indicate that these learned behaviors translate into effective control performance, with returns of $8320.8$, $5971.7$, $7495.1$, and $138.9$, respectively. These visualizations provide qualitative evidence that OMAF can learn stable and coordinated locomotion behaviors across diverse multi-agent continuous-control settings.

To further examine the coordination efficiency of OMAF under a limited training budget, we visualize the evaluation trajectories of all six algorithms on the Coupled HalfCheetah environment after a fixed number of training steps. Each trajectory corresponds to one evaluation episode generated by the learned policy. The visualization provides a qualitative comparison of the resulting coordination behaviors and shows that OMAF achieves effective coordination while reaching strong task performance within the given training budget.

\begin{figure}[ht]
    \centering
    \begin{subfigure}{0.9\textwidth}
    \includegraphics[width=\linewidth]{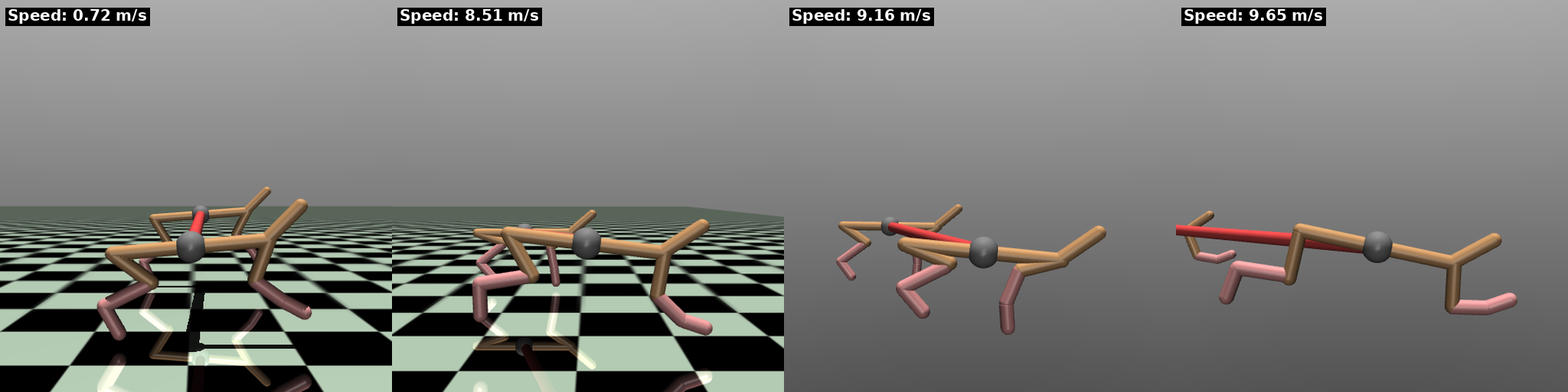}
    \caption{CoupledHalfCheetah Return $=8320.8$}
    \end{subfigure}
    \begin{subfigure}{0.9\textwidth}
    \includegraphics[width=\linewidth]{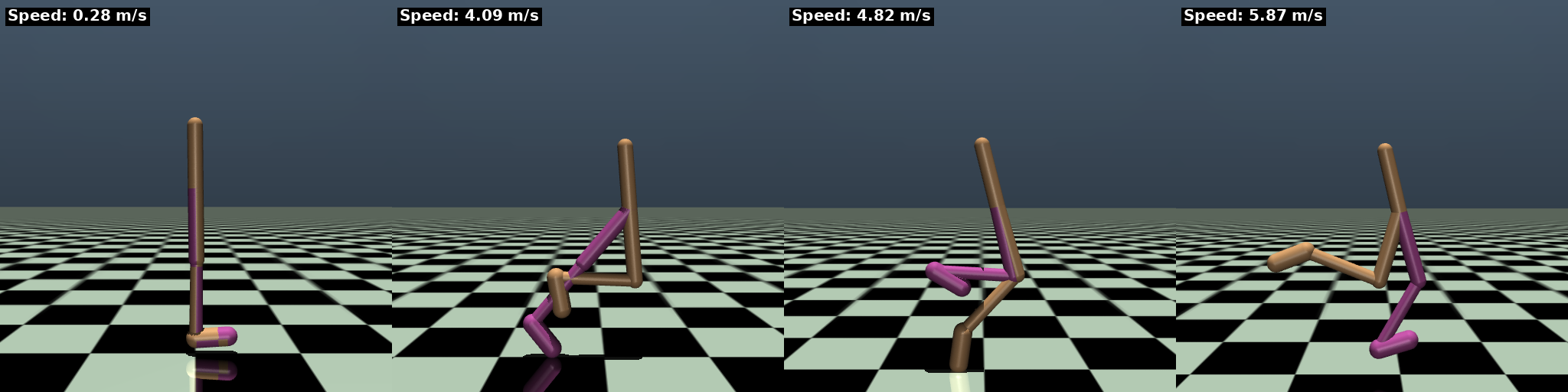}
    \caption{Walker2d 2$\times$3 Return $=5971.7$}
    \end{subfigure}
    \begin{subfigure}{0.9\textwidth}
    \includegraphics[width=\linewidth]{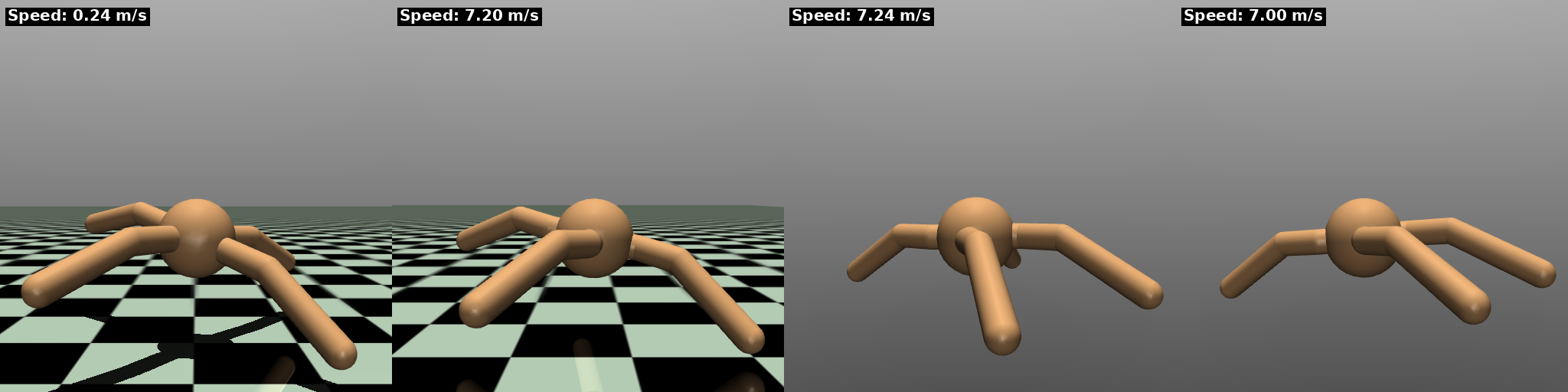}
    \caption{Ant 2$\times$4 Return $=7495.1$}
    \end{subfigure}
    \begin{subfigure}{0.9\textwidth}
    \includegraphics[width=\linewidth]{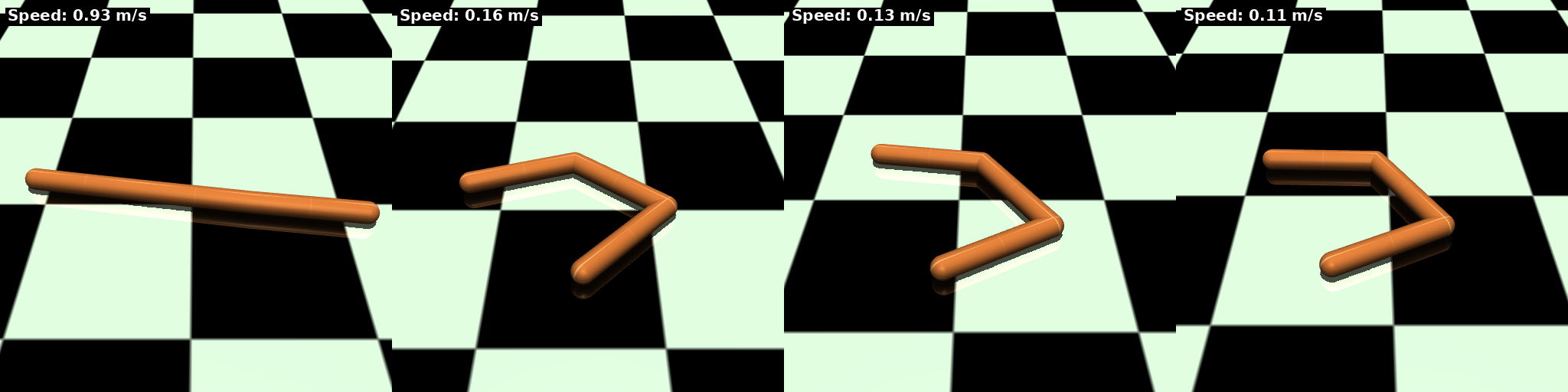}
    \caption{Swimmer 2$\times$1 Return $=138.9$}
    \end{subfigure}
    \caption{Visualization of learned diffusion policies across four distinct MAMuJoCo tasks. We display snapshots of the agents at timesteps $t \in  \{1, 100, 250, 500\}$ with the instantaneous velocity, demonstrating the stable and coordinated behaviors achieved by our OMAF algorithm.}
    \label{fig:visualizationenv}

\end{figure}

\begin{figure}[htbp]
    \centering
    \begin{subfigure}{0.7\textwidth}
    \includegraphics[width=\linewidth]{figure/CoupledHalfCheetah_strip_omaf.png}
    \caption{OMAF Return $=8320.8$}
    \end{subfigure}
    \begin{subfigure}{0.7\textwidth}
    \includegraphics[width=\linewidth]{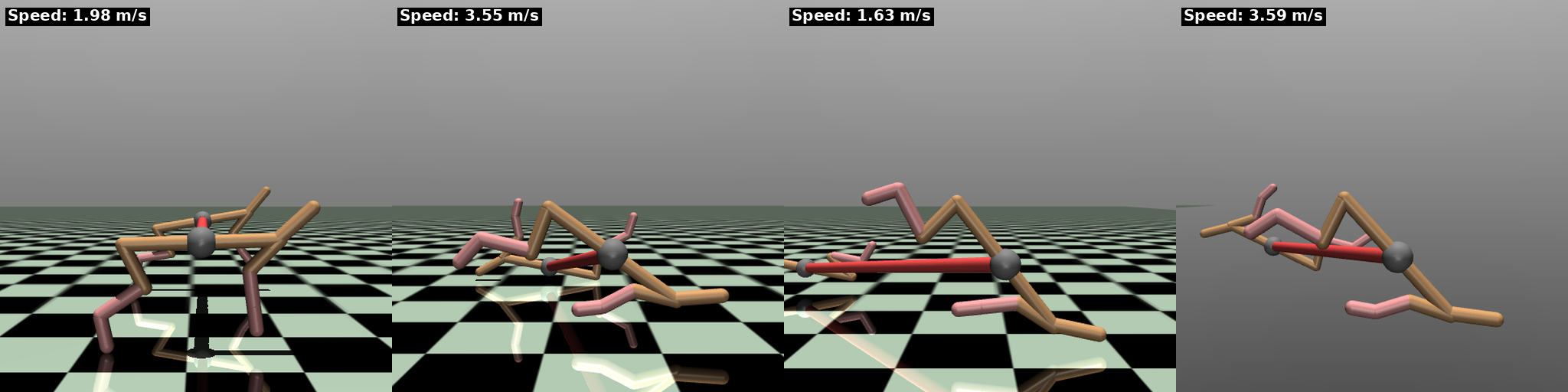}
    \caption{OMAD Return $=1742.09$}
    \end{subfigure}
    \begin{subfigure}{0.7\textwidth}
    \includegraphics[width=\linewidth]{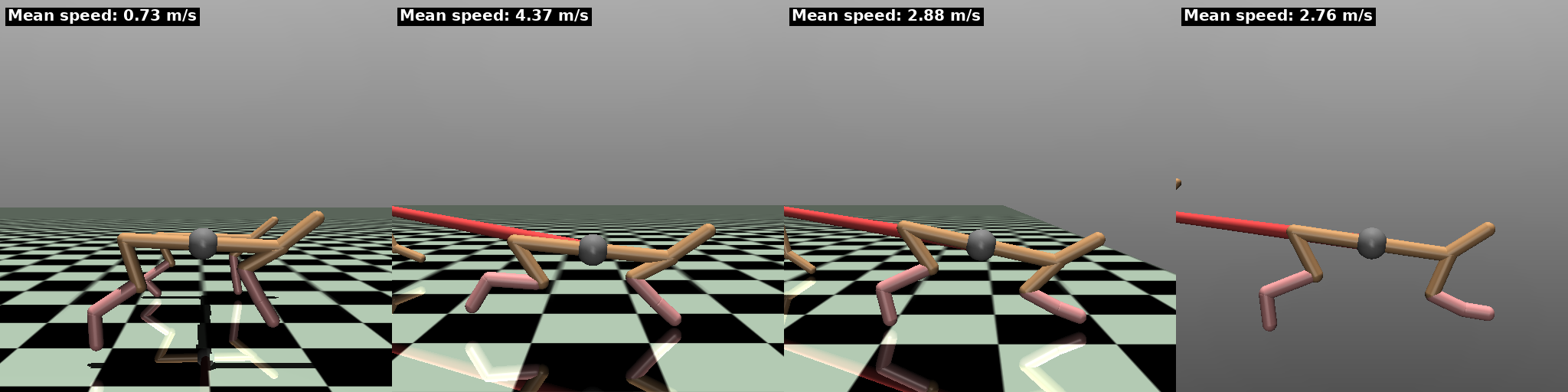}
    \caption{HASAC Return $=2505.96$}
    \end{subfigure}
    \begin{subfigure}{0.7\textwidth}
    \includegraphics[width=\linewidth]{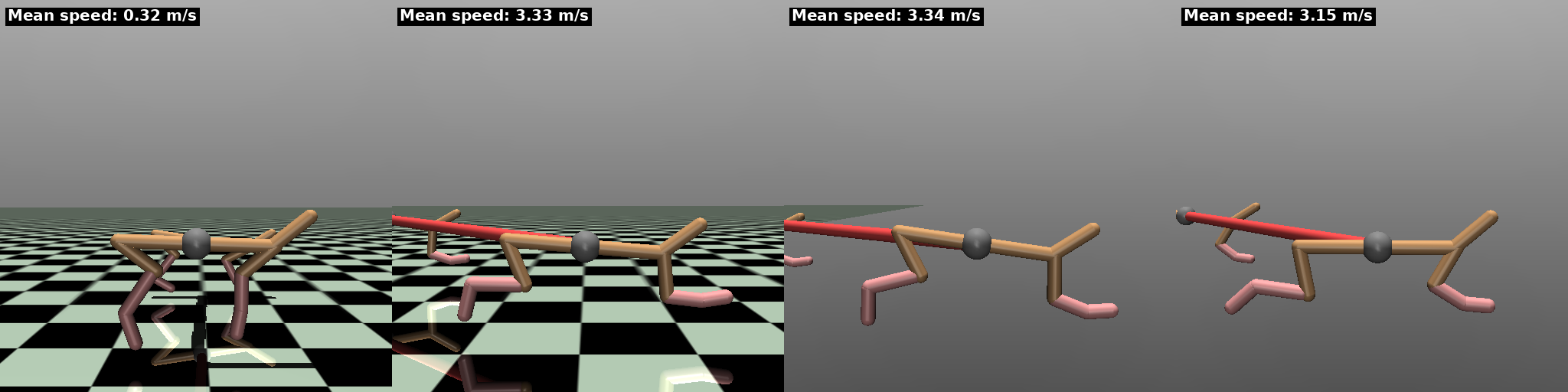}
    \caption{HATD3 Return $=3356.11$}
    \end{subfigure}
    \begin{subfigure}{0.7\textwidth}
    \includegraphics[width=\linewidth]{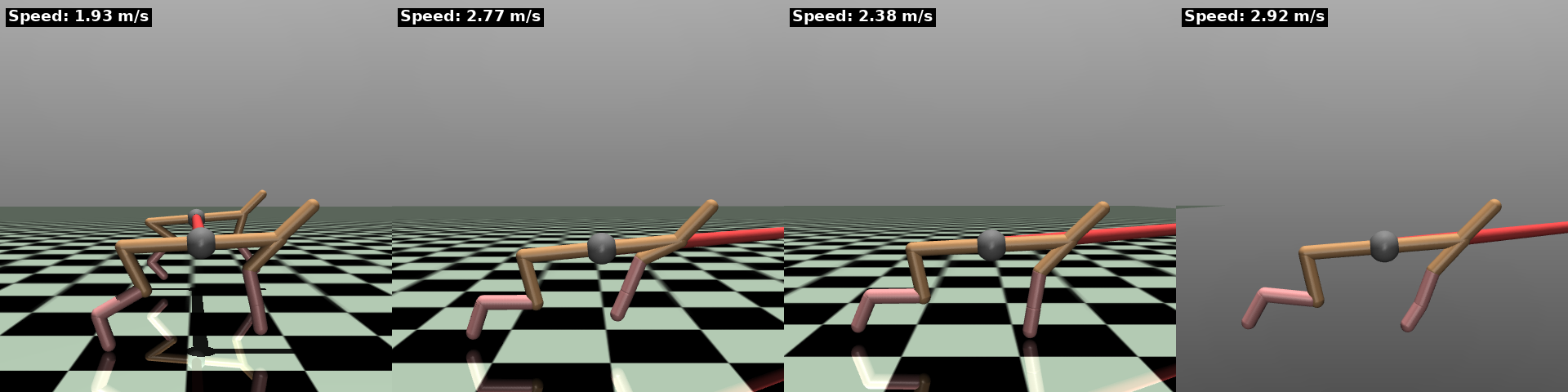}
    \caption{MAMFPO Return $=1841.64$}
    \end{subfigure}
    \begin{subfigure}{0.7\textwidth}
    \includegraphics[width=\linewidth]{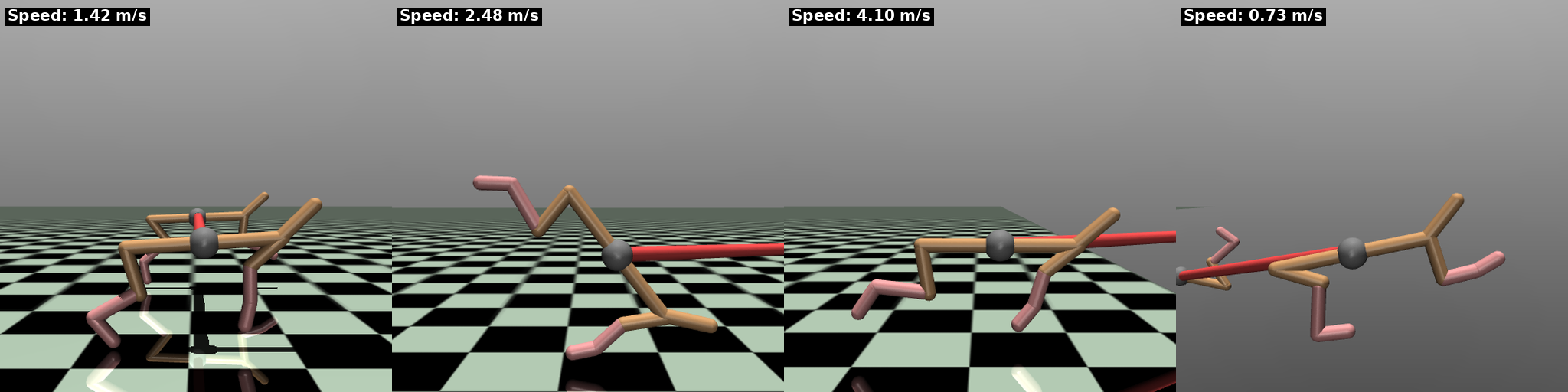}
    \caption{MAFlowRL Return $=2460.30$}
    \end{subfigure}
    \caption{Visualization of OMAF and other baseline algorithms in MAMuJoCo task CoupledHalfCheetah. We display snapshots of the agents at timesteps $t \in  \{1, 100, 250, 500\}$ with the instantaneous velocity, demonstrating the stable and coordinated behaviors achieved by OMAF.}
    \label{fig:visualization-alg}

\end{figure}

\end{document}